\newcommand\footnoteref[1]{\protected@xdef\@thefnmark{\ref{#1}}\@footnotemark}
\newtheorem{theorem}{Theorem}
\newtheorem{proposition}{Proposition}
\newtheorem{lemma}{Lemma}
\newtheorem{definition}{Definition}
\newcommand{\TheTitle}{Provable Low Rank Plus Sparse Matrix Separation Via Nonconvex Regularizers}
\title{{\TheTitle}\thanks{This work was supported in part by National Science Foundation under Grant Number DMS-1736326}}
\author{April Sagan\thanks{Department of Biomedical Informatics, School of Medicine, University of Pittsburgh, Pittsburgh, USA;
UPMC Hillman Cancer Center, University of Pittsburgh, Pittsburgh, USA
  (\href{aprilsagan1729@gmail.com}, \url{http://www.aprilsagan.net}).}
\and John E. Mitchell\thanks{Department of Mathematical Sciences, Rensselaer Polytechnic Institute, Troy, NY
  (\href{mitchj@rpi.edu})}}
\begin{document}

\maketitle

\begin{abstract}
This paper considers a large class of problems where we seek to recover a low rank matrix and/or sparse vector from some set of measurements.  While methods based on convex relaxations suffer from a (possibly large) estimator bias, and other nonconvex methods require the rank or sparsity to be known a priori, we use nonconvex regularizers to minimize the rank and $l_0$ norm without the estimator bias from the convex relaxation.  We present a novel analysis of the alternating proximal gradient descent algorithm applied to such problems, and bound the error between the iterates and the ground truth sparse and low rank matrices.  The algorithm and error bound can be applied to sparse optimization, matrix completion, and robust principal component analysis as special cases of our results.

\end{abstract}


\textbf{Keywords: }Nonconvex Regularizers, Low Rank Models, Sparse Optimization, Matrix Completion, Robust PCA

\textbf{AMS subject classifications: }62J07, 15A83, 90C26

\section{Introduction}

In order to better understand large datasets and to make inferences about them, it is helpful to understand the underlying patterns in the datasets.  Even when the underlying pattern is highly nonlinear, the data matrix can be approximated as being low rank, an observation that enables techniques to analyze the data in terms of a low dimensional latent space, such as Principal Component Analysis (PCA), identifying outliers through Robust PCA (RPCA), and accurately inferring data points from very few observations of a data matrix through matrix completion.  

Data analysis techniques based upon this low rank property have received much attention in the past decade, with impressive computational results on large matrices and theoretical results guaranteeing the  success of RPCA and matrix completion \cite{candes_tao_2009}\cite{candes_recht_2012}.  Many of these results are based on minimizing the nuclear norm of a matrix (defined as the sum of the singular values) as a surrogate for the rank function, similar to minimizing the $l_1$ norm to promote sparsity in a vector.  

While the convex relaxation is an incredibly useful technique in many applications, minimizing the nuclear norm of a matrix has been shown to introduce a (sometimes very large) estimator bias.  Intuitively, we expect to see this bias because if we hope to recover a rank $r$ matrix, we must impose enough weight on the nuclear norm term so that the $(r+1)$th singular value is zero.  By the nature of the nuclear norm, this requires also putting weight on minimizing the first $r$ singular values, resulting in a bias towards zero proportional to the spectral norm of the noise added to the true data matrix.  

Fortunately, recent work has shown that the estimator bias from convex regularizers can be reduced (or even eliminated, for well conditioned matrices) by using nonconvex regularizers, such as the Schatten-p norm or the minimax concave penalty (MCP).
It has been shown that for sparse optimization, the nonconvexity introduced from these regularizers does not create a further burden in the the optimization process -- in the right circumstances, the nonconvex problem has just one minimizer \cite{Loh_Wainwright_2017}.  Similar results for rank minimization problems have been previously unavailable, a gap that we have aimed to fill in this paper.
\subsection{Summary of Contributions}
In this paper, we focus on the nonconvex, unconstrained optimization problem where we find a low-rank matrix $L\in \mathbb{R}^{d_1\times d_2} $ and sparse vector $s\in \mathbb{R}^{d_s}$.
 \begin{equation}
 \label{eqn:nonconvex_formulation}
 \underset{L,s}{\text{min }}\;
  \frac{\lambda_L}{d_1d_2}\Phi_{\gamma_L}(L)+\frac{\lambda_s}{d_s} \phi_{\gamma_s} (s)+\frac{1}{2n}||\mathcal{A}_L(L)+A_S s-b||_2^2
\end{equation}

The linear mappings $\mathcal{A}_L: \mathbb{R}^{d_1 \times d_2} \rightarrow\mathbb{R}^n$ and $A_s \in\mathbb{R}^{d_s \times n}$ serve as the \textit{observation models} of the underlying low rank matrices and sparse vectors.  Most commonly, we are interested in the observation model 
$$\big[ \mathcal{A}_{\Omega^{obs} }(X)\big]_k=X_{i_k,j_k}$$
for $(i_k, j_k) \in \Omega^{obs}$, where $\Omega^{obs} \subseteq \{1 , \ldots d_1 \} \times \{1, \ldots d_2\}$ is the set of indices where we have a measurement of the low rank matrix we hope to reconstruct.

We denote $\phi: \mathbb{R} \rightarrow \mathbb{R}_+$ to be a concave function used to promote sparsity in both the singular values of $L$ and individual entries in $s$.  We overload the notation to allow for $\phi_{\gamma_s}$ to be a function of a vector $x\in \mathbb{R}^{d_s}$ whose range is $\mathbb{R}_+$, and we denote $\Phi_{\gamma_L}: \mathbb{R}^{d_1\times d_2} \rightarrow \mathbb{R}_+$ as a surrogate to the rank function:  $$\phi_{\gamma_s}(x)=\sum_{i=1}^{d_s} \phi_{\gamma_s}(x_i),\; \: \Phi_{\gamma_L}(X)=\sum_{i=1}^{\text{min}(d_1,d_2)} \phi_{\gamma_L}(\sigma_i(X)) .$$  where $\sigma_i(X)$ denotes the $i$th largest singular value of $X$. We restrict our focus to nonconvex regularizers that are \textit{amenable regularizers}, as described in \cite{Loh_Wainwright_2015}, and defined below.
\begin{definition}A function $\phi_\gamma(t): \mathbb{R}_+ \rightarrow \mathbb{R}_+$ is amenable if it satisfies the following criteria.
\label{as:amenable}
\begin{enumerate}
    \item $\phi_\gamma(0)=0$
    \item $\phi_\gamma$ is non decreasing
    \item For $t>0$, the function $\frac{\phi_\gamma(t)}{t}$ is non increasing in $t$.
    \item the function $\phi_\gamma$ is differentiable for all $t \neq 0$ and subdifferentiable at $ t=0$ with $ \lim_{t \rightarrow 0^+}\phi_\gamma'(t)= 1$.
    \item The function $\phi_\gamma(t)$ is $ \nu$ weakly convex.  That is, the function $\rho_{\nu}:=\phi_\gamma(t)+ \frac{\nu}{2}t^2$ is convex.
\end{enumerate}
\end{definition}


We present a very simple alternating algorithm to find a stationary point of \eqref{eqn:nonconvex_formulation}.  Though similar algorithms have been previously studied and shown to converge, we present a novel analysis of this algorithm and show that not only does this algorithm linearly converge, but it converges to the exact low rank matrix $L^*$ and sparse vector $s^*$ when no Gaussian noise is present.  Furthermore, when Gaussian noise is present, we obtain an error bound that matches the minmax optimal rate.  Our bound greatly improves upon bounds obtained in previous results analysing the convex relaxation and quantify the observation based on computational results that nonconvex regularizers reduce the impact of noise on the quality of the estimator.  
\subsection{Related Works}

The matrix completion problem is as follows: given the values of a low rank matrix for only a sparse set of indices, we seek to determine the rest of the values of the matrix.  While the problem of finding the minimum rank matrix that fits the observations is NP hard in general, it has been shown that under some assumptions, the global minimizer to the convex problem 
\begin{equation*}
         \underset{X\in \mathbb{R}^{d_1 \times d_2}}{\text{min}} ||X||_* \; \text{s.t.} \; X_{ij}=M_{ij} \;  \forall (i,j) \in \Omega^{obs}
\end{equation*}
is exactly $M$, where $\Omega^{obs}$ is the set of indices of $M$ we have observed.  If $M$ is rank $r$ and $\mu-$incoherent (as defined in section 3.1), then with high probability for a set, $\Omega^{obs}$ of $n$ indices chosen uniformly at random, $M$ is the unique minimizer to the convex relaxation so long as $n> C \mu r d_1 \log^{1.2}(d_1)$ for some universal constant $C$ \cite{candes_tao_2009, candes_recht_2012}.  This condition was later improved to $n> C \mu r d_1 \log(d_1)$ \cite{recht_2011}.

Using a convex relaxation for Robust PCA has similar results. While PCA is a powerful technique, it has been shown to be less reliable when just a sparse set of data points are grossly corrupted, and so the goal of RPCA is to identify and remove such corruptions by separating the data matrix into the sum of a low rank and sparse matrix.
\begin{equation*}
        \underset{L,s}{\text{min}} {||L||_* + \lambda_0 ||S||_1 } \:\text{ st } \: L_{ij}+S_{ij}=M_{ij} \; \forall (i,j) \in \Omega^{obs}
    \end{equation*}
The convex relaxation was shown to give the exact solution when every entry of $M$ is observed in \cite{CHANDRASEKARAN_sanghavi_2011}, and when only partially observed (under the same assumptions necessary in matrix completion) by \cite{candes_li_2011}. In contrast to this paper, these works assume that there is measurement noise (besides the sparse corrupted entries).

In many cases of practical interest, our measurements may have some level of noise in addition to being only partially observed or having some corrupted entries.  For matrix completion, we can relax the constraint using a penalty formulation as follows:
$$ \underset{X\in \mathbb{R}^{d_1 \times d_2}}{\text{min}} \lambda ||X||_*  + \sum_{(i,j) \in \Omega^{obs}} \big( X_{ij}-M_{ij}\big)^2 $$
Likewise, RPCA can be formulated as solving:
\begin{align}\label{eqn:rpca_reg}
     \underset{L, S\in \mathbb{R}^{d_1 \times d_2}}{\text{min}}& {\lambda_L||L||_*+ \lambda_s ||S||_1 }  +\sum_{(i,j) \in \Omega^{obs}} \big(L_{ij}+S_{ij}-M_{ij}\big)^2
\end{align}       
Statistical guarantees on the performance of the first of these estimators are discussed in \cite{candes_plan_2010, negahban_wainwright_2012, negahban_wainwright_2011}, and the latter in \cite{agarwal_negahban_2012}.  Specific bounds and assumptions for these works are discussed in Section 4.

In order to reduce the estimator bias for $l_1$ minimization, \cite{candes2008enhancing} proposed an iteratively reweighted $l_1$ norm method to place more weight on minimizing smaller entries, and less on entries further from zero.  This idea was generalized to minimizing any amenable regularizes to promote sparsity.  Theoretical results on the subject include algorithmic guarantees similar to the ones presented in this paper \cite{wang_liu_2014}, and a proof that the nonconvex problem has no spurious local minimizers \cite{Loh_Wainwright_2015, Loh_Wainwright_2017}.

The same regularizers could be used as a surrogate to the rank function, as originally proposed by \cite{mohan_fazel_2012}.  In \cite{Lu_Tang_2016, Lu_Zhu_2015}, the authors propose a generalization of the singular value thresholding algorithm proposed by \cite{cai2008singular}, which was later applied to the problem of RPCA in \cite{Chartrand_2012, Kang_Peng_2015}.  For the problem of matrix completion, the algorithms proposed by \cite{yao_kwok_2019} and \cite{sagan2020lowrank} achieve the fastest computational complexity in other state of the art methods.

Other approaches to low rank optimization rely on, instead of minimizing a surrogate to the rank function, constraining the matrix to be a given rank.  This can be done using constrained optimization to optimize over the set of all rank $r$ matrices \cite{vandereycken2013low}, or by using the low-rank factorization of a matrix $X=UV^T$ for $U\in \mathbb{R}^{d_1 \times r}$ and $V \in \mathbb{R}^{d_2 \times r}$ \cite{lmafitMC, lmafitRPCA}. 

 Previous work on theoretical results pertaining to rank and sparsity constrained methods have consisted of algorithmic guarantees ensuring that we can obtain a matrix sufficiently close to the ground truth low rank and/or sparse matrix for both matrix completion \cite{Jain_Netrapalli_2013} and RPCA \cite{netrapalli2014non, zhang_wang_2018, yi_park_2016}.  Additionally, both of these problems have been shown to have no spurious local minimizers, and so the ground truth matrices are the only minimizers under some assumptions \cite{ge_lee_2016} \cite{ge_jin_2017}.

\begin{table}
\caption{Table of common nonconvex regularizers used for $l_0$ and rank minimization, along with the associated proximal operator.}
\label{table:regularizers}
\centering
\begin{tabular}{l|l|l}
                  & $\phi_\gamma(x)$&   $\text{prox}_{\phi_\gamma}^\tau(y)$ \\ \hline
$l_1$             &$ |x|$  &  $\text{sign}(y)(|y|-\frac{1}{\tau})_+$\\ \hline
\begin{tabular}[c]{@{}l@{}}Capped \\ $l_1$ Norm\end{tabular}  &$\text{min}( |x|, \frac{ c}{2})_+ $&$\begin{cases} \text{sign}(y)(|y|-\frac{1}{\tau}) & |y| \leq \frac{ c}{2} \\ y & \text{otherwise         } \end{cases}$\\ \hline
SCAD              &$\begin{cases}
    |x| & |x| \leq 1 \\
\frac{2 \alpha |x|-x^2-1}{2(\alpha-1)} &  1 \leq |x| \leq \alpha\\
\frac{(\alpha+1)}{2} & |x| > \alpha 
\end{cases}$    & $\frac{1}{\alpha-1}$  $ \begin{cases}
       \text{sign}(y)(|y|-\frac{1}{\tau})_+ & |y| \leq 1 \\
   \frac{\text{sign}(y)\tau(a-1)}{\tau (a-1)-1} (|y|-\frac{a}{\tau})      &  \leq |y| \leq \alpha \\
   y      & |y| > \alpha
\end{cases}$ \\ \hline
MCP& $\begin{cases}  |x|-\frac{x^2}{2}& |x| \leq 1\\ \frac{1}{2}& |x| >1 \end{cases} $& $\begin{array}{l}
   \hspace{-0.2cm} \text{sign}(y)\text{min}\bigg( |y|,  
     \hspace{0.5cm}\frac{ \tau}{ \tau -1} (|y|-\frac{1}{\tau})_+\bigg)\hspace{-.5cm}
\end{array}$ \\ 
\end{tabular}
\end{table}

\section{Alternating Proximal Gradient Descent Algorithm}

Many different methods have been shown to be effective when minimizing nonconvex relaxations of the $l_0$ and rank functions, including iterative reweighted methods, and methods based on low rank factorization.  In this paper, we focus on the most commonly used technique: alternating proximal gradient descent.  

{Consider the objective function $F(x)=\phi(x)+g(x)$, where $g(x)$ is convex and differentiable, and $\phi(x)$ is weakly convex. Instead of minimizing $F(x)$ directly, the proximal gradient descent method approximates $g(x)$ by a quadratic, strongly convex function $\bar{g}_k(x)$ centered about the point $x^k$.
$$\bar{g}_k(x)= g(x^k)+\langle\nabla g(x^k), x-x^k\rangle +\frac{1}{2\tau}||x-x^k||^2$$
At each iteration, we now minimize the function $F_k(x)=\phi(x)+g_k(x)$.  For sufficiently small $\tau$, this function is strongly convex, and the proximal gradient descent algorithm is guaranteed to converge.}

The proximal gradient descent method applied to the function $F(x)=\phi(x)+\bar{g(x)}$ iteratively solves the following problem:
\begin{align*}
    x^{k+1}=\underset{x}{\text{argmin  }} \phi(x)+\frac{1}{2\tau}||x-\big(x^{k}-\tau \nabla g(x^k)\big)||^2:= \text{prox}_\phi^\tau\big(x^{k}-\tau \nabla g(x^k)\big)
\end{align*} 
where we defined the \textit{proximal operator} of a function as the minimum of a combination of the function and the distance from a given point.  For many functions $\phi$ that we are interested in, the proximal operator has a closed form solution, some of which are shown in Table \ref{table:regularizers}.  

For each of the sparsity promoting regularizers in Table \ref{table:regularizers}, the proximal operator is also dubbed as a \textit{shrinkage operator} or a \textit{thresholding operator} because when the input is less than $\frac{1}{\tau}$, the output is 0. Otherwise, the input is moved towards zero or, for some nonconvex regularizers, is unchanged.  So, we can view the proximal gradient algorithm as iteratively taking a step in the gradient direction of $g(x)$, and then applying the proximal operator to promote sparsity.

When applied to the optimization problem in Equation \eqref{eqn:nonconvex_formulation}, we have

\begin{subequations}
\begin{align}
    \tilde{L}^{k+1}=&L^{k}-\tau_L\frac{d_1d_2}{n}\mathcal{A}_L^*\big(\mathcal{A}_L(L^{k}) +A_s s^k-b\big)\\
    L^{k+1}=& \text{prox}_{\Phi_{\gamma_L}}^{{\tau_L}{\lambda_L}}(\tilde{L}^{k+1}) \\
        \tilde{s}^{k+1}=& s^{k}-{\tau_S}\frac{d_s}{n}A_s^T\big(\mathcal{A}_L(L^{k+1}) +A_s s^{k}-b\big)\\
    s^{k+1}=& \text{prox}_{\phi_{\gamma_s}}^{{\tau_s}{\lambda_s}}(\tilde{s}^{k+1})
\end{align}
\end{subequations}
To further simplify the problem, the following proposition will allow $L$ subproblem to be solved in each singular value separately. 
\begin{proposition}
\label{prop:singular_value_prox}
Consider the optimization problem 
\begin{equation} \label{eqn:prox_prop2}
    \underset{X}{\text{min }} \sum_i \phi_\gamma(\sigma_i(X))+ \frac{1}{2\tau}||X-Y||_F^2 
\end{equation} 
where $\phi_\gamma$ is a $\nu$ weakly convex function.  If $\nu< \tau$, then equation \eqref{eqn:prox_prop2} is strongly convex and the minimizer $X^*$ has the same singular vectors as $Y$ with singular values given by 
\begin{align*}
    \sigma_i(X^*)=& \underset{x}{\text{argmin }}  \phi_\gamma(x)+ \frac{1}{2\tau}(x-\sigma_i(Y))^2 \\
    :=& \text{prox}_{\phi_\gamma}^\tau(\sigma_i(Y))
\end{align*}
where $\text{prox}_{\phi_\gamma}^\tau(\sigma_i(Y))$ is the proximal operator.  
\end{proposition}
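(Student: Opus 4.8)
The plan is to first argue that \eqref{eqn:prox_prop2} is strongly convex under the stated hypothesis on $\nu$ and $\tau$, so that it has a unique minimizer $X^*$, and then to pin down $X^*$ by bounding the objective below by a function of the singular values alone (via von Neumann's trace inequality) and exhibiting a matrix that attains that bound. For the strong convexity step I would write the objective as
\[
 \sum_i \rho_\nu(\sigma_i(X)) \;+\; \Big( \tfrac{1}{2\tau}\|X-Y\|_F^2 - \tfrac{\nu}{2}\|X\|_F^2 \Big),
\]
where $\rho_\nu(t)=\phi_\gamma(t)+\tfrac{\nu}{2}t^2$ is convex by amenability (criterion 5) and nondecreasing on $\mathbb{R}_+$ (its derivative $\phi_\gamma'(t)+\nu t$ is nonnegative since $\phi_\gamma$ is nondecreasing). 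Hence $t\mapsto\rho_\nu(|t|)$ is convex on $\mathbb{R}$, so $\sigma\mapsto\sum_i\rho_\nu(|\sigma_i|)$ is an absolutely symmetric convex function, and therefore $X\mapsto\sum_i\rho_\nu(\sigma_i(X))$ is convex (the standard fact that a convex absolutely symmetric function of the singular values is convex as a function of the matrix). The remaining term expands to $\tfrac12(\tfrac1\tau-\nu)\|X\|_F^2-\tfrac1\tau\langle X,Y\rangle+\text{const}$, which is strongly convex under the hypothesis; a convex function plus a strongly convex function is strongly convex, so \eqref{eqn:prox_prop2} has a unique minimizer $X^*$.

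Next I would reduce to the singular values. Expanding the Frobenius term, the objective equals
\[
 \sum_i \psi(\sigma_i(X)) \;-\; \tfrac1\tau\langle X,Y\rangle \;+\; \tfrac1{2\tau}\|Y\|_F^2, \qquad \psi(t):=\phi_\gamma(t)+\tfrac1{2\tau}t^2 .
\]
By von Neumann's trace inequality $\langle X,Y\rangle\le\sum_i\sigma_i(X)\sigma_i(Y)$, so the objective is bounded below by $\sum_i\big[\psi(\sigma_i(X))-\tfrac1\tau\sigma_i(X)\sigma_i(Y)\big]+\tfrac1{2\tau}\|Y\|_F^2$, which depends only on the singular values of $X$. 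Each summand can be minimized independently over its nonnegative argument, and
\[
 \text{argmin}_{x\ge 0}\ \Big\{\psi(x)-\tfrac1\tau x\,\sigma_i(Y)\Big\}
 \;=\; \text{argmin}_{x\ge 0}\ \Big\{\phi_\gamma(x)+\tfrac1{2\tau}(x-\sigma_i(Y))^2\Big\}
 \;=\; \text{prox}_{\phi_\gamma}^\tau(\sigma_i(Y)) =: \sigma_i^* ,
\]
so the objective of \eqref{eqn:prox_prop2} is bounded below by $\sum_i\big[\psi(\sigma_i^*)-\tfrac1\tau\sigma_i^*\sigma_i(Y)\big]+\tfrac1{2\tau}\|Y\|_F^2$.

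To finish, take an SVD $Y=U\,\text{diag}(\sigma_1(Y),\sigma_2(Y),\dots)\,V^\top$ and set $X^*:=U\,\text{diag}(\sigma_1^*,\sigma_2^*,\dots)\,V^\top$. A direct trace computation gives $\langle X^*,Y\rangle=\sum_i\sigma_i^*\sigma_i(Y)$ and $\sum_i\psi(\sigma_i(X^*))=\sum_i\psi(\sigma_i^*)$, so $X^*$ attains the lower bound and is therefore the (unique) minimizer, with the same singular vectors as $Y$. To read off the statement $\sigma_i(X^*)=\text{prox}_{\phi_\gamma}^\tau(\sigma_i(Y))$ with the correct decreasing labeling, I would observe that $\text{prox}_{\phi_\gamma}^\tau$ is nondecreasing: since $\phi_\gamma(x)+\tfrac1{2\tau}x^2$ is (strongly) convex, the map $y\mapsto\text{argmin}_x\{\phi_\gamma(x)+\tfrac1{2\tau}x^2-\tfrac1\tau xy\}$ is monotone by monotonicity of the subdifferential, hence $\sigma_1^*\ge\sigma_2^*\ge\cdots\ge 0$ and these are genuinely the ordered singular values of $X^*$.

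\textbf{Main obstacle.} The expansions and the separable scalar minimization are routine. The two points that need care are (i) justifying that $X\mapsto\sum_i\rho_\nu(\sigma_i(X))$ is convex — so that strong convexity, and hence uniqueness, is legitimate — which relies on amenability forcing $\rho_\nu$ to be convex and nondecreasing on $\mathbb{R}_+$ so that its even extension is convex and the spectral-function theorem applies; and (ii) correctly invoking von Neumann's trace inequality to collapse the matrix problem to a separable scalar one, together with the monotonicity of the scalar proximal map that aligns the coordinatewise minimizers with the decreasing ordering of the singular values asserted in the proposition.
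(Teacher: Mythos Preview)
Your argument is correct. Both proofs rest on the same two ingredients---strong convexity of the full objective, and a singular-value inequality that collapses the matrix problem to scalars---but deploy them differently. The paper argues by contradiction via the Hoffman--Wielandt inequality: if the unique minimizer $X^*$ did not share $Y$'s singular vectors, then $\bar X=U\,\mathrm{diag}(\sigma(X^*))\,V^\top$ would have the same spectral penalty and, by Hoffman--Wielandt, no larger Frobenius term, contradicting uniqueness. You instead use von Neumann's trace inequality to produce a separable lower bound and then exhibit the explicit candidate $U\,\mathrm{diag}(\sigma^*)\,V^\top$ that attains it. Since Hoffman--Wielandt is essentially von Neumann after expanding $\|X-Y\|_F^2$, the two routes are close cousins; yours is more constructive and is also more careful on two points the paper leaves loose or implicit---the convexity of the spectral term (you correctly pass through $\rho_\nu$ and the spectral-function theorem, whereas the paper's write-up opens with ``By convexity of $\phi$'' although $\phi$ is only weakly convex) and the monotonicity of the scalar prox needed to align the $\sigma_i^*$ with the decreasing ordering asserted in the statement. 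One small remark: the proposition's hypothesis ``$\nu<\tau$'' is almost certainly a typo for $\nu<1/\tau$, which is exactly the condition $\tfrac{1}{\tau}-\nu>0$ your expansion requires; the paper's own proof never uses the inequality as stated either.
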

\begin{proof} 
By convexity of $\phi$, 
$$ \phi(\lambda_i(X_2)) \geq \phi(\lambda_i(X_1)) + \phi'(\lambda_i(X_1)) (\lambda_i(X_2)-\lambda_i(X_1)) $$
Summing over all $i=1, \ldots n$, 
$$ \Phi(X_2) \geq \Phi(X_1) + \langle \nabla \phi( \lambda(X_1)), \lambda(X_1)-\lambda(X_2)\rangle$$
By Corollary 1, 
$$ \Phi(X_2) \geq \Phi(X_1) + \langle U_1 \nabla \phi( \lambda(X_1)) U_1^T, X_1-X_2\rangle$$
And, as $\nabla \Phi(X)= U \nabla \phi( \lambda(X)) U^T$, $\Phi$ is convex.

Consider if the global optimizer $X^*$ was not of the form $U \Sigma V^T$, for a diagonal matrix $\Sigma$.  Let $\bar{X}=U \Sigma^* V^T$, where $\Sigma^*_{ii}=\sigma_i(X^*)$.  By the of Hoffman-Wielandt inequality (Corollary A.1), 
$$||X^*-Y||^2_F\leq \sum_i \big(\sigma_i(X^*)-\sigma_i(Y^*)\big)^2 =||X^*-Y||^2.$$
And, because $\Phi(\bar{X})=\Phi(X^*)$, $\bar{X}$ must also be a global minimizer, contradicting the premise that $X^*$ is the sole global minimizer.
\end{proof}

Proposition \ref{prop:singular_value_prox} tells us that $L^k$ has the singular vectors
of $\tilde{L}^k$  and singular values given by 
$$\sigma_i(L^k)=\text{prox}_{\phi_\gamma}^{{\tau_L}{\lambda_L}}(\sigma_i(\tilde{L}^k)).$$
Likewise, the subproblem in $S$ can be solved in each entry of $s$ individually.
\begin{algorithm}\caption{Alternating Proximal Gradient Descent for Low-Rank Plus Sparse Optimization (APGD)} \label{main_algorithm}
 \begin{algorithmic}
 \FOR{$k=1,\ldots$}
 \STATE $g^{k+1}=-\tau_L\frac{d_1d_2}{n} \mathcal{A}_L^*\big(\mathcal{A}_L(U^k\Sigma^kV^k)+A_ss^k-b\big)$
 \STATE $[U^{k+1}, \tilde{\Sigma}^{k+1}, V^{k+1}]=\text{LRSSVD}(U^{k}, {\Sigma}^{k}, V^{k},  g^{k+1})$
 \STATE ${\Sigma}^{k+1}=\text{prox}\big( \tilde{\Sigma}^{k+1} \big)$
 \STATE $\tilde{s}^k =s^k-\tau_s\frac{d_s}{n} A_s^T \big(\mathcal{A}_L(U\Sigma V)+A_s s^k -b\big)$
 \STATE $s^{k+1}=\text{prox}^{\lambda_s}\big(\tilde{s}^{k+1}\big)$
 \ENDFOR
 \end{algorithmic}
\end{algorithm}

  The slowest operation in the alternating proximal gradient method is by far the singular value decomposition.  However, in practice we can reduce the number of operations by calculating the truncated singular value decomposition only using the first $r_0$ singular values, where $r_0$ is an upper bound on the rank, and enforce that the remaining singular values are zero.  Alternatively, we can calculate each singular value in descending order and stop when a singular value falls below $\lambda_L$, as all remaining singular values will be set to zero by the proximal operator.  So, in the case of RPCA where each entry is observed, each iteration has a computational complexity of $\mathcal{O}(d_1d_2r_0)$, which matches other state of the art methods. 
 
In the case of matrix completion, however, only a sparse set of entries of the low rank matrix are observed, which could be used to increase the efficiency by reducing the amount of computation needed to find the singular value decomposition of $L$ at each iterations.  For a low rank matrix with a low rank factorization $U\Sigma V$, we refer to the problem of finding the SVD of the rank $r$ approximation to the matrix $U \Sigma V +g$ for a sparse matrix $g$ as Low Rank plus Sparse SVD (LRSSVD), originally proposed by \cite{Prateek2010}.

The LRSSVD task can be accomplished efficiently using the same methods as if we were to find the SVD of any other matrix, such as the Power Iteration method.  Recall that the computational complexity of the Power Iteration is limited by the amount of operations needed to multiply the matrix by a vector.  Because the computational complexities of calculating both $u(U\Sigma V^T+Y)$ for $u \in \mathbb{R}^{d_1}$ and $(U\Sigma V^T+Y)v$ for $v \in \mathbb{R}^{d_2}$ are $\mathcal{O}((d_1+d_2)r_0+n)$, we can calculate the top $r_0$ singular values and vectors of $X+Y$ with only $\mathcal{O}\big((d_1+d_2)r_0^2+nr_0\big)$ operations.  
  \begin{algorithm}
 \caption{Singular Value Decomposition for a Low Rank Plus Sparse Matrix}
 \label{alg:SVDsubproblem}
 \begin{algorithmic}[1]
 \renewcommand{\algorithmicrequire}{\textbf{Input:}}
 \renewcommand{\algorithmicensure}{\textbf{Output:}}
 \REQUIRE $U, \Sigma, V, Y$
 \ENSURE Singular value decomposition of $U\Sigma V^T +Y$
 \\ \textit{Initialization}: $\tilde{V}=V$
  \FOR {$k = 1,..,$}
  \STATE $\tilde{U}^{k}=(U\Sigma V^T+Y)\tilde{V}(\tilde{V}\tilde{V}^T)^\dagger$
  \STATE $\tilde{V}^{k}=(\tilde{U}\tilde{U}^T)Z^\dagger \tilde{U}^T(U\Sigma V^T+Y)$
  \ENDFOR
  \STATE $[Q^U, R^U]=\text{QR}(\tilde{U}^k)$, $[Q^V, R^V]=\text{QR}(\tilde{V}^k)$
    \STATE $[U^R, \Sigma^R, V^R]=\text{SVD}(R^U \tilde{\Sigma} {R^V}^T)$
    \STATE $\tilde{U}=Q^U U^R$, $\tilde{V}=Q^V V^R$\\
 \textbf{Return:} $(\tilde{U}, \tilde{\Sigma}, \tilde{V})$
 \end{algorithmic} 
 \label{Agorthim:LRSSVD}
 \end{algorithm}
The other operations in Algorithm \ref{main_algorithm} take no more time than the LRSSVD.  The gradient in the $L$ direction, $g^k$, requires calculating $\Sigma_{ii}^k U^k_i(V_i^k)^T$ for each entry in the support of $\mathcal{A}_L$.  In the case of matrix completion, this is $nr$ operations, which matches the computational complexity per iteration for state of the art matrix completion algorithms. 

\section{Analysis of APGD Algorithm}
In this section, we present the main result of the paper: a recursive bound on the difference of the iterates of the alternating proximal gradient algorithm and the ground truth low rank matrix and sparse vector.  We present the bound for the most general case, and give results on specific problems in the following section.  

\subsection{Restricted Isometry and Orthogonality Properties}
In order to bound the error in the output of our algorithm relative to the underlying ground truth low-rank and sparse matrices $L^*$ and $s^*$, we must first make a number of assumptions about $L^*$, $s^*$, and the observation models $\mathcal{A}_L$ and $A_s$.

First, we must assure that the low rank matrix $L^*$ can be separated from a sparse matrix -- that is, $L^*$ is not sparse itself.  Not only is this necessary for low-rank plus sparse decomposition, but for the problem of matrix completion, this assumption is necessary to assure that a sparse set of observations is a good representation of the entire matrix.  For example, consider the matrix consisting of zeros in every entry besides one entry, where the value is 1. We must observe every entry in the matrix to assure that we can reconstruct the matrix exactly, due to the fact that we must observe the nonzero entry and every entry in its row and column.  To exclude such ill-posed problems from our analysis, we will assume that $L^*$ is \textit{incoherent}, defined as:

\begin{definition}[\cite{candes_tao_2009}]
Let $X \in \mathbb{R}^{d_1 \times d_2}$ be a rank $r$ matrix with singular value decomposition $X=U\Sigma V$, for orthonormal matrices $U\in \mathbb{R}^{d_1 \times r}, V\in \mathbb{R}^{d_2 \times r}$, and diagonal matrix $\Sigma \in \mathbb{R}^{r\times r}$. The tangent space of $X$ is defined as 
\begin{equation}
    \mathcal{T}=\bigg\{  UA+BV \bigg| A\in\mathbb{R}^{r \times d_2}, B\in \mathbb{R}^{d_1 \times r} \bigg\}.
\end{equation}  Furthermore, we say the matrix $X$ (or its tangent space $\mathcal{T}$) is $\mu$ incoherent if
\begin{equation}
||U_{:i}||^2_2 \leq \frac{\mu r}{d_1}, \; ||V_{j:}||^2_2 \leq \frac{\mu r}{d_2} \;\;\; \forall i,j
\end{equation}
\end{definition}

We define the projection of a matrix onto the sparse space $\Omega$ as
\begin{equation}
    \mathcal{P}_\Omega\big(X\big)=\begin{cases} X_{ij} & (i,j) \in \Omega \\ 0 & \text{else} \end{cases}
\end{equation}
and onto the tangent space $\mathcal{T}$ as
\begin{equation}
    \mathcal{P}_\mathcal{T}\big(X\big)=UU^T X+XVV^T-UU^TXVV^T
\end{equation}

Next, we discuss the conditions that the observation models $\mathcal{A}_L$ and $A_s$ must satisfy in order to recover the ground truth low rank and sparse matrix, known as the restricted isometry property.  Loosely, the RIP states that for any two vectors in $\Omega$ (or matrices in $\mathcal{T}$), we can obtain a sufficiently accurate estimate of the distance between the two through the observation model $A_s$ (or $\mathcal{A}_L$).

The RIP was originally proposed for sparse vectors by Candès and Tao in \cite{candes_tao_05}.  Original attempts to extend the property to low rank matrices failed to consider coherent matrices, and thus had little practical applications.  Candès and Tao \cite{candes_tao_2009} later introduced the incoherence assumption and proved that it applied to the problem of matrix completion.  Here, we give two versions of a definition of restricted isometry property, one for sparse vectors and one for matrices that have a low-rank, incoherent tangent space.

\begin{definition}
\label{as:rip} The linear mapping $A_s$ satisfies the $(\alpha, \kappa)$ sparse Restricted Isometry Property if, for any $x$ satisfying $||x||_0 \leq \alpha d_s$
    \begin{align*}
        (1-\kappa_S) ||x||^2 &\leq \tau_s \frac{n}{d_s}||A_s x||^2  \leq (1+\kappa_S) ||x||^2  
    \end{align*}
for some constant $\tau_s$.  Likewise, the linear mapping $\mathcal{A}$ satisfies the $(\mu, r, \kappa)$ low rank Restricted Isometry Property if for any $X$ in a $\mu$-incoherent rank $r$ tangent space $\mathcal{T}$,
\begin{align*}
            (1-\kappa_L) ||X||_F^2 &\leq \tau_L\frac{n}{d_1d_2} ||\mathcal{A}_LX||_F^2  \leq (1+\kappa_L) ||X||_F^2 
\end{align*}
for some constant $\tau_L$.
\end{definition}

In some cases, it may be more useful to use the following characterization of the RIP, which bounds the difference between the operator $\tau \mathcal{A}^*\mathcal{A}$ and the identity operator when restricted to sparse vectors or low-rank and incoherent matrices.
\begin{proposition}
For a matrix $A_s\in \mathbb{R}^{n\times d_s}$ satisfying the $(\alpha, \kappa_s)$ sparse RIP,
    \begin{align*}
        || \frac{\tau_sn}{d_s}\mathcal{P}_\Omega A_s^TA_s\mathcal{P}_\Omega - \mathcal{P}_\Omega||^2 \leq \kappa_s.
    \end{align*}
Likewise, for any linear mapping $\mathcal{A}_L:\mathbb{R}^{d_1\times d_2}\rightarrow\mathbb{R}^n$ satisfying the $(\mu, r, \kappa_L)$ low rank RIP,
    \begin{align*}
                ||\tau_L\frac{d_1d_2}{n} \mathcal{P}_\mathcal{T} \mathcal{A}_L^* \mathcal{A}_L\mathcal{P}_\mathcal{T}- \mathcal{P}_\mathcal{T}||^2 \leq \kappa_L
    \end{align*}

\end{proposition}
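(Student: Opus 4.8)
The plan is to derive both inequalities from a single template: rewrite each operator as a self-adjoint operator that vanishes off the relevant subspace, identify its quadratic form with the quantity the RIP controls, and invoke the Rayleigh-quotient characterization of the operator norm of a self-adjoint operator.

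For the sparse case, set $M := \tfrac{\tau_s n}{d_s}\mathcal{P}_\Omega A_s^T A_s \mathcal{P}_\Omega - \mathcal{P}_\Omega$, a linear operator on $\mathbb{R}^{d_s}$. Since $\mathcal{P}_\Omega$ is an orthogonal projection (symmetric and idempotent), $M$ is symmetric and satisfies $M = \mathcal{P}_\Omega M \mathcal{P}_\Omega$; hence $M$ annihilates the orthogonal complement of $\mathrm{range}(\mathcal{P}_\Omega)$ and leaves $\mathrm{range}(\mathcal{P}_\Omega)$ invariant, so $\|M\| = \sup\{\,|\langle v, Mv\rangle| : \|v\| = 1,\ \mathcal{P}_\Omega v = v\,\}$. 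For such a $v$ one has $\|v\|_0 \le |\Omega| \le \alpha d_s$ and $\langle v, Mv\rangle = \tfrac{\tau_s n}{d_s}\|A_s v\|^2 - \|v\|^2$, so the $(\alpha,\kappa_s)$ sparse RIP gives $-\kappa_s \|v\|^2 \le \langle v, Mv\rangle \le \kappa_s\|v\|^2$. Taking the supremum yields $\|M\| \le \kappa_s$, and since $\kappa_s \le 1$, $\|M\|^2 \le \kappa_s^2 \le \kappa_s$.

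The low-rank case is identical, now over the space of $d_1 \times d_2$ matrices with the Frobenius inner product. The operator $\mathcal{P}_\mathcal{T}$ is the orthogonal projection onto $\mathcal{T}$ (symmetric and idempotent for $\langle\cdot,\cdot\rangle_F$, as one reads off its explicit formula), so $\mathcal{M} := \tau_L\tfrac{d_1d_2}{n}\mathcal{P}_\mathcal{T}\mathcal{A}_L^*\mathcal{A}_L\mathcal{P}_\mathcal{T} - \mathcal{P}_\mathcal{T}$ is self-adjoint and vanishes on $\mathcal{T}^\perp$, whence $\|\mathcal{M}\| = \sup\{\,|\langle X, \mathcal{M}X\rangle_F| : \|X\|_F = 1,\ X \in \mathcal{T}\,\}$. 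For $X \in \mathcal{T}$ one has $\langle X, \mathcal{M}X\rangle_F = \tau_L\tfrac{d_1d_2}{n}\|\mathcal{A}_L X\|^2 - \|X\|_F^2$, and since the $(\mu,r,\kappa_L)$ low-rank RIP is assumed for every element of the ($\mu$-incoherent, rank-$r$) tangent space $\mathcal{T}$ — not merely for its rank-$r$ elements, even though a generic $X\in\mathcal{T}$ may have rank up to $2r$ — it applies directly and gives $|\langle X, \mathcal{M}X\rangle_F| \le \kappa_L \|X\|_F^2$; hence $\|\mathcal{M}\| \le \kappa_L$ and $\|\mathcal{M}\|^2 \le \kappa_L$.

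There is no genuinely hard step here — the statement is essentially a reformulation of the RIP in operator form — but two bookkeeping points deserve care: first, the passage from the operator norm over the full ambient space to the restricted supremum, which relies on the sandwich identities $M = \mathcal{P}_\Omega M \mathcal{P}_\Omega$ and $\mathcal{M} = \mathcal{P}_\mathcal{T}\mathcal{M}\mathcal{P}_\mathcal{T}$ together with self-adjointness; and second, the fact that the conclusion bounds the \emph{square} of the operator norm, which uses $\kappa_s,\kappa_L \le 1$ (implicit in the RIP constants, since otherwise the lower RIP bound is vacuous).
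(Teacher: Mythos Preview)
The paper does not actually supply a proof of this proposition; it is stated as an alternate ``characterization of the RIP'' and then used later in the proof of Theorem~\ref{thm:main} without further justification. Your argument is the standard one and is correct: self-adjointness plus the sandwich identities $M=\mathcal{P}_\Omega M\mathcal{P}_\Omega$ (resp.\ $\mathcal{M}=\mathcal{P}_\mathcal{T}\mathcal{M}\mathcal{P}_\mathcal{T}$) reduce the operator norm to a Rayleigh quotient over the restricted subspace, where the RIP inequality applies verbatim. Your observation that the paper's Definition~\ref{as:rip} phrases the low-rank RIP as holding for \emph{every} $X$ in a $\mu$-incoherent rank-$r$ tangent space (rather than for rank-$r$ matrices) is exactly the reading needed here, and is consistent with how the paper later invokes the result on $\mathcal{T}^k$. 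Your handling of the squared norm via $\kappa\le 1$ is also the right patch; the paper presumably writes the squared form because that is what is applied directly (e.g., $\|\mathcal{M}X\|_F^2\le\kappa_L\|X\|_F^2$) in Section~3.3.
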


Finally, we discuss the interplay between the set of sparse matrices and the low rank, incoherent tangent space, and their observation models.  We hope to be able to separate the measurement vector $b$ into two parts: one in the span of $\mathcal{A}_L \mathcal{P}_\mathcal{T}$, and one in the span of $A_s \mathcal{P}_\Omega$.  In order to achieve this quickly, we require that there are no non-trivial vectors in the intersection of the two sets, which is equivalent to saying that $||\mathcal{P}_\Omega A_s^T \mathcal{A}_L \mathcal{P}_\mathcal{T}|| <1$.  Under some assumptions, this norm is actually close to zero, a concept we refer to as \text{restricted orthogonality}, which we define here and verify that it applies to the problems we are interested in in Section 4.


\begin{definition}
Linear maps $\mathcal{A}_L$ and $A_s$ satisfy the  $\kappa-$ restricted orthogonality property over the sets $\mathcal{T}$ and $\Omega$ (respectively) when
    \begin{align*}
      \frac{d_s}{n} || \mathcal{P}_\Omega A_s^* \mathcal{A}_L\mathcal{P}_\mathcal{T}||^2 \leq \kappa,\;\frac{d_1d_2}{n} || \mathcal{P}_\mathcal{T} \mathcal{A}_L^* A_s\mathcal{P}_\Omega||^2 \leq \kappa
    \end{align*}
\end{definition}

\subsection{Main Result}
Define the difference between the iterates of the alternating proximal gradient descent algorithm and the ground truth low rank matrix and sparse vector at iteration $k$ as $\Delta_L^k=L^*-L^k$ and $\Delta_s^k=s^*-s^k$.  Our main result in the most general form gives a bound on the norm of $\Delta_L^k$ and $\Delta_s^k$ in terms of the differences at the previous iteration, $\Delta_L^{k-1}$ and $\Delta_s^{k-1}$. 
\begin{theorem} \label{thm:main} Let $L^k$ and $s^k$ be the sequences generated by Algorithm \ref{main_algorithm}.  Assume that $$b=\mathcal{A}_L(L^*)+A_s s^*+\mathcal{E} \in \mathbb{R}^n,$$ where $L^* \in \mathbb{R}^{d_1 \times d_2}$ is a rank $r$ and $\mu-$ incoherent matrix, and $s^*\in \mathbb{R}^{d_s}$ is a sparse vector with $supp(s^*)= \Omega$, and the linear mappings $\mathcal{A}_L$ and $A_s$ satisfy the $(2r, 3\mu, \kappa_L)-$ low rank RIP and the $(\alpha, \kappa_s)-$ sparse RIP respectively, and together satisfy the ROP with constant $\kappa$.  If $\lambda_L \geq ||\mathcal{A}_L^*(\mathcal{E})||_2+||\mathcal{A}_L^*A_s \Delta_s^{k-1}||_2$, then
\begin{align*}
    ||\Delta_L^{k+1}&||_F^2\leq  \kappa_L||\Delta_L^k||_F^2+\kappa\tau_L||\Delta_s^k||^2 +\frac{\tau_Ld_1d_2}{n}||\mathcal{P}_\mathcal{T} \mathcal{A}_L^* \mathcal{E} ||_F^2 +\lambda_Lr\phi'(\sigma_r(L^*))
\end{align*}
Likewise,  if  $\lambda_S \geq ||A_s^T(\mathcal{E})||_\infty+||A_s^T\mathcal{A}_L \Delta_L^{k-1}||_\infty$ and $s_{min}$ is the smallest non-zero value of $s^*$, then 
\begin{align*}
    ||\Delta_s^{k+1}||^2&\leq  \kappa_s||\Delta_s^k||^2+\kappa\tau_s||\Delta_L^{k+1}||_F^2 +\frac{\tau_sd_s}{n}||\mathcal{P}_\Omega A_s^T \mathcal{E} ||^2 +\lambda_s \alpha d_s \phi'(s_{min}-\lambda_s) 
\end{align*}

\end{theorem}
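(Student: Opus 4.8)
The plan is to analyze one sweep of Algorithm~\ref{main_algorithm}, rewrite the errors $\Delta_L^{k+1}$ and $\Delta_s^{k+1}$ as the output of a ``noisy proximal gradient'' map, and then bound each resulting term using the RIP, the ROP, and the amenability axioms of $\phi$ (Definition~\ref{as:amenable}). First I would substitute $b=\mathcal{A}_L(L^*)+A_ss^*+\mathcal{E}$ into the gradient step, so that the residual becomes $\mathcal{A}_L(L^k)+A_ss^k-b=-\mathcal{A}_L(\Delta_L^k)-A_s\Delta_s^k-\mathcal{E}$ and hence $\tilde L^{k+1}=L^k+\tau_L\frac{d_1d_2}{n}\mathcal{A}_L^*\big(\mathcal{A}_L(\Delta_L^k)+A_s\Delta_s^k+\mathcal{E}\big)$. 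By the step-size/weak-convexity condition (cf.\ Proposition~\ref{prop:singular_value_prox}) the subproblem $X\mapsto\tfrac12\|X-\tilde L^{k+1}\|_F^2+\tau_L\lambda_L\Phi_{\gamma_L}(X)$ is strongly convex, so its optimality condition yields $\tilde L^{k+1}-L^{k+1}=\tau_L\lambda_L W^{k+1}$ with $W^{k+1}\in\partial\Phi_{\gamma_L}(L^{k+1})$; by Proposition~\ref{prop:singular_value_prox} this $W^{k+1}$ shares singular vectors with $L^{k+1}$ and has singular values $\phi_{\gamma_L}'(\sigma_i(L^{k+1}))\in[0,1]$ (using axiom (4) of Definition~\ref{as:amenable}). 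Combining gives the key identity
\[
\Delta_L^{k+1}=\Delta_L^k-\tau_L\tfrac{d_1d_2}{n}\mathcal{A}_L^*\mathcal{A}_L\Delta_L^k-\tau_L\tfrac{d_1d_2}{n}\mathcal{A}_L^*A_s\Delta_s^k-\tau_L\tfrac{d_1d_2}{n}\mathcal{A}_L^*\mathcal{E}+\tau_L\lambda_L W^{k+1}.
\]

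Before bounding terms I would check that the iterates stay in the sets where the RIP/ROP are assumed. The hypothesis $\lambda_L\ge\|\mathcal{A}_L^*\mathcal{E}\|_2+\|\mathcal{A}_L^*A_s\Delta_s^{k-1}\|_2$, together with the ``dead zone'' of width $\tau_L\lambda_L$ that axiom (4) forces on $\mathrm{prox}_{\phi_{\gamma_L}}^{\tau_L\lambda_L}$, is used to show the spurious singular values of $\tilde L^{k+1}$ are thresholded away, keeping $L^{k+1}$ low-rank; then $\Delta_L^{k+1}=L^*-L^{k+1}$ lies in $\mathcal{T}:=\mathcal{T}(L^*)+\mathcal{T}(L^{k+1})$, which has rank $\le 2r$ and, by the standard fact about sums of incoherent tangent spaces, inherits $3\mu$-incoherence, so the $(2r,3\mu,\kappa_L)$ low-rank RIP and the operator-norm version stated after Definition~\ref{as:rip} both apply and $\mathcal{P}_\mathcal{T}$ may be inserted freely. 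The analogous bound on $\lambda_s$ forces $\mathrm{supp}(s^k)\subseteq\Omega$, so $\Delta_s^k$ is supported on a set of size $\le\alpha d_s$. This is carried out by induction on $k$, in tandem with the error recursion itself.

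Now I would pair the identity with $\Delta_L^{k+1}$ (equivalently, start from the proximal optimality inequality $\tfrac12\|L^{k+1}-\tilde L^{k+1}\|_F^2+\tau_L\lambda_L\Phi_{\gamma_L}(L^{k+1})\le\tfrac12\|L^*-\tilde L^{k+1}\|_F^2+\tau_L\lambda_L\Phi_{\gamma_L}(L^*)$) and bound the four contributions. The term from $\Delta_L^k-\tau_L\tfrac{d_1d_2}{n}\mathcal{A}_L^*\mathcal{A}_L\Delta_L^k$ is controlled by $\|\mathcal{P}_\mathcal{T}-\tau_L\tfrac{d_1d_2}{n}\mathcal{P}_\mathcal{T}\mathcal{A}_L^*\mathcal{A}_L\mathcal{P}_\mathcal{T}\|\le\sqrt{\kappa_L}$, giving $\kappa_L\|\Delta_L^k\|_F^2$ after Young's inequality; the cross term by the $\kappa$-ROP, giving $\kappa\tau_L\|\Delta_s^k\|^2$; the noise term by Cauchy--Schwarz, the RIP and Young, giving $\tfrac{\tau_Ld_1d_2}{n}\|\mathcal{P}_\mathcal{T}\mathcal{A}_L^*\mathcal{E}\|_F^2$. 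The delicate one is $\tau_L\lambda_L\langle\Delta_L^{k+1},W^{k+1}\rangle$ (equivalently $\tau_L\lambda_L(\Phi_{\gamma_L}(L^*)-\Phi_{\gamma_L}(L^{k+1}))$): using concavity of $\phi_{\gamma_L}$ (the tangent-line bound $\phi(b)\le\phi(a)+\phi'(a)(b-a)$), $\phi_{\gamma_L}(0)=0$, and the Hoffman--Wielandt inequality (as in the proof of Proposition~\ref{prop:singular_value_prox}) to descend to singular values, one gets a bound of order $\lambda_L\sum_{i\le r}\phi_{\gamma_L}'(\sigma_i(L^{k+1}))\,|\sigma_i(L^*)-\sigma_i(L^{k+1})|$; the monotonicity of $\phi_{\gamma_L}'$ (axiom (3)) and the closeness of the top $r$ singular values of $L^{k+1}$ to those of $L^*$ then collapse this to $\lambda_L\,r\,\phi_{\gamma_L}'(\sigma_r(L^*))$ --- the estimator-bias term, small precisely because $\phi'$ decays for nonconvex amenable regularizers.

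The bound on $\Delta_s^{k+1}$ is the mirror image on the coordinate-separable proximal step in $s$: this step uses the already-updated $L^{k+1}$ (so $\|\Delta_L^{k+1}\|_F^2$, not $\|\Delta_L^k\|_F^2$, enters the cross term through the ROP), the sparse RIP on $\Omega$ replaces the low-rank RIP and $\mathcal{P}_\Omega$ replaces $\mathcal{P}_\mathcal{T}$, and the bias term becomes $\lambda_s\sum_{i\in\Omega}\phi_{\gamma_s}'(|s^{k+1}_i|)\,|s_i^*-s_i^{k+1}|$; since $|\Omega|\le\alpha d_s$, the proximal step moves each coordinate by at most $\tau_s\lambda_s$ (hence $|s^{k+1}_i|\ge s_{min}-\lambda_s$ on $\Omega$), and $\phi_{\gamma_s}'$ is nonincreasing, this is at most $\lambda_s\,\alpha d_s\,\phi_{\gamma_s}'(s_{min}-\lambda_s)$. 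The main obstacle is exactly the proximal/bias term: turning the nonconvexity of $\phi$ into the clean bias estimate requires all of the amenability axioms together with the $\lambda$ lower bounds, and one must keep $\tau_L\lambda_L\nu<1$ and $\tau_s\lambda_s\nu<1$ so the subproblems are strongly convex and the optimality inequalities above are valid; a secondary difficulty is the rank- and incoherence-bookkeeping needed to guarantee that $\Delta_L^{k}$, $\Delta_L^{k+1}$ (and $\Delta_s^{k}$, $\Delta_s^{k+1}$) genuinely lie in the sets where the assumed RIP and ROP hold.
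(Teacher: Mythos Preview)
Your overall plan is sound and would lead to bounds of the right shape, but it follows a different route than the paper. The paper does \emph{not} pair the error identity with $\Delta_L^{k+1}$ or invoke the proximal optimality inequality. Instead it packages the entire proximal-step analysis into two standalone lemmas (Lemma~\ref{lem:low_rank_prox} for the low-rank block and Lemma~\ref{lem:sparse_prox} for the sparse block): using $\tfrac12$-strong convexity of the prox objective, one has $\|L^*-\bar L\|_F\le 2\inf\|\partial f(L^*)\|_F$, and the paper then \emph{chooses} the particular subgradient $W=-\tfrac{1}{\lambda}\mathcal{P}_{\mathcal{T}^\perp}(\delta)$ (which is admissible precisely because $\lambda\ge\|\delta\|_2$). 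This choice kills the $\mathcal{T}^\perp$ part of $\delta$ in one stroke, leaving $\|\Delta_L^{k+1}\|_F\le 2\|\mathcal{P}_\mathcal{T}(\delta)\|_F+2\lambda\sqrt{r}\,\phi'(\sigma_r(L^*))$, with $\mathcal{T}$ the \emph{fixed} tangent space of $L^*$. Only afterwards is $\delta$ split and the RIP/ROP applied (enlarging $\mathcal{T}$ to $\mathcal{T}^k$ just for the RIP term).

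Compared with your approach, this buys two things. First, the bias term $\lambda_L r\,\phi'(\sigma_r(L^*))$ drops out immediately from $\|U\Sigma^\phi V^T\|_F$, with no need for Hoffman--Wielandt, concavity bounds, or the somewhat circular step of assuming ``closeness of the top $r$ singular values of $L^{k+1}$ to those of $L^*$'' in order to replace $\phi'(\sigma_i(L^{k+1}))$ by $\phi'(\sigma_r(L^*))$. Second, the noise term lands on the fixed subspace $\mathcal{T}$, matching the theorem statement; in your pairing approach the natural projection is onto the iterate-dependent $\mathcal{T}(L^*)+\mathcal{T}(L^{k+1})$, which is a slightly larger (and moving) target. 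Your scheme is the more standard ``optimality-inequality'' analysis and would work with extra care, but the paper's subgradient-at-the-truth trick is shorter. One further point: the $3\mu$-incoherence you invoke is not just ``the standard fact about sums of tangent spaces''; the paper first uses Davis--Kahan (Lemma~\ref{lem:incoherence}) to show $L^{k+1}$ itself is $2\mu$-incoherent, and only then the union $\mathcal{T}^k$ is $3\mu$.
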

For each of these bounds, we can think of the third term as the estimation error introduced by the noise, and the fourth term as the approximation error, which accounts for the bias in the regularizer proportional to the derivative of the regularizer.  Previous results for the nuclear norm and $l_1$ norm give similar bounds, but make the concession that the approximation error is the dominating term.  Under some circumstances, that term is equal to zero in our bound.

\subsection{Proof of Main Result}
We start by presenting the the following two lemmas regarding the proximal operator for the low rank regularizers, which we prove in the following section.

\begin{lemma} \label{lem:low_rank_prox}
Let $L^*$ be a rank $r$, $\mu$-incoherent matrix whose singular vectors form the tangent space $T$, and $\bar{L}$ be defined as
\begin{equation}
    \bar{L}=\underset{L\in \mathbb{R}^{d_1 \times d_2}}{\text{argmin}} \lambda \Phi(L)+||L-L^*+\delta||_F^2
\end{equation}
where $\Phi$ is an at most $\frac{1}{\lambda}$ weakly convex regularizer satisfying Assumption \ref{as:amenable}, and $\delta \in \mathbb{R}^{d_1 \times d_2}$ satisfies $||\delta||_2 \leq \lambda$.  Define $\Delta_L=\bar{L}-L^*$. Then, 
\begin{equation}
    ||\Delta_L||^2_F \leq 2||\mathcal{P}_\mathcal{T}\big(\delta\big)||^2_F+ \lambda r \phi'\big(\sigma_r(L^*)\big)
    \end{equation}
\end{lemma}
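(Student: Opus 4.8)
The plan is to exploit the optimality of $\bar L$ in the strongly convex problem defining it, combined with the amenability of $\Phi$ (in particular the weak-convexity bound $\nu \le 1/\lambda$, which makes the objective strongly convex since the quadratic contributes curvature $2$, comfortably dominating $\nu$). First I would write the first-order optimality condition at $\bar L$: there is a subgradient $G \in \partial \Phi(\bar L)$ with $\lambda G + 2(\bar L - L^* + \delta) = 0$, i.e. $\bar L - L^* = -\delta - \tfrac{\lambda}{2} G$, so $\Delta_L = -\delta - \tfrac{\lambda}{2}G$. Then I would test this against $\Delta_L$ itself, and separately against the ground-truth structure. The natural move is to split $\Delta_L$ (or rather the quantity $\langle \Delta_L, \cdot\rangle$) via the orthogonal decomposition $I = \mathcal P_\mathcal T + \mathcal P_{\mathcal T^\perp}$ relative to the tangent space $\mathcal T$ of $L^*$.

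The key steps, in order: (1) From strong convexity of the objective $\lambda \Phi(L) + \|L - L^* + \delta\|_F^2$, compare its value at $\bar L$ and at $L^*$: this yields something like $\|\Delta_L\|_F^2 \le \lambda\big(\Phi(L^*) - \Phi(\bar L)\big) + \langle 2\delta, \Delta_L\rangle + (\text{lower order})$, or more cleanly, use the subgradient inequality $\Phi(L^*) \ge \Phi(\bar L) + \langle G, L^* - \bar L\rangle$ together with the weak-convexity correction. (2) Bound $\langle 2\delta, \Delta_L\rangle$: here I would use $\|\delta\|_2 \le \lambda$ and decompose $\Delta_L$ using $\mathcal P_\mathcal T$, applying Cauchy--Schwarz on the tangent part to get a term $2\|\mathcal P_\mathcal T(\delta)\|_F \|\mathcal P_\mathcal T(\Delta_L)\|_F$ and handling the off-tangent part by absorbing it against a corresponding $\Phi$-contribution — the point being that $\Phi$ grows like the sum of singular values far from $\mathcal T$, whereas $\delta$'s spectral norm is controlled by $\lambda$. (3) Control the term $\lambda(\Phi(L^*) - \Phi(\bar L))$: since $\phi$ is concave with $\phi'(t)$ nonincreasing and $\phi(0)=0$, for the $r$ ``large'' singular directions aligned with $L^*$ one gets $\phi(\sigma_i(L^*)) - \phi(\sigma_i(\bar L)) \le \phi'(\sigma_i(L^*))(\sigma_i(L^*) - \sigma_i(\bar L))$ wait — concavity gives the reverse; rather I would use that $\phi'$ is nonincreasing so $\phi'(\sigma_i(\bar L)) \ge \phi'(\sigma_r(L^*))$ isn't quite it either. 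The correct route: bound $\Phi(L^*) \le \sum_{i=1}^r \phi(\sigma_i(L^*))$ and, using concavity and monotonicity, $\phi(\sigma_i(L^*)) \le \phi(\sigma_i(\bar L)) + \phi'(\sigma_i(\bar L))\,\Delta$-perturbation, ultimately extracting the clean bound $\lambda r\, \phi'(\sigma_r(L^*))$ as the residual ``bias'' term after the singular-value perturbations are reabsorbed into $\|\Delta_L\|_F^2$ via Weyl/Hoffman--Wielandt (Corollary A.1, already cited). (4) Assemble: collect the $\|\Delta_L\|_F^2$ terms on the left, the $\|\mathcal P_\mathcal T(\delta)\|_F^2$ on the right (after an AM--GM step $2ab \le a^2 + b^2$ to turn the cross term $2\|\mathcal P_\mathcal T(\delta)\|_F\|\mathcal P_\mathcal T(\Delta_L)\|_F$ into $2\|\mathcal P_\mathcal T(\delta)\|_F^2 + \tfrac12\|\Delta_L\|_F^2$, absorbing the latter), and the leftover $\lambda r \phi'(\sigma_r(L^*))$, which gives exactly the claimed inequality.

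The main obstacle I anticipate is step (2)–(3): cleanly separating the contribution of $\Delta_L$ on the tangent space (which must be paired with $\mathcal P_\mathcal T(\delta)$) from its contribution off the tangent space (which must be killed using the regularizer $\Phi$, since there we have no isometry control but we do have $\lim_{t\to 0^+}\phi'(t) = 1$ forcing $\Phi$ to grow linearly in the small singular values). Getting the constants to line up so that the off-tangent $\Phi$-terms fully cancel against $\langle 2\delta, \mathcal P_{\mathcal T^\perp}\Delta_L\rangle$ — which needs $\|\delta\|_2 \le \lambda$ precisely, not just $O(\lambda)$ — and simultaneously not losing more than a factor $2$ on the $\mathcal P_\mathcal T(\delta)$ term, is the delicate accounting. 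A secondary subtlety is justifying the singular-value comparisons: $\bar L$ shares singular vectors with $L^* - \delta$ by Proposition \ref{prop:singular_value_prox}, but these need not align with those of $L^*$, so I would lean on Hoffman--Wielandt (Corollary A.1) rather than any naive entrywise singular-value bookkeeping, and on the fact that the proximal map for an amenable $\phi$ is nonexpansive on each singular value.
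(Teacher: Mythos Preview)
Your approach is genuinely different from the paper's and considerably more laborious. The paper does \emph{not} work from the optimality condition at $\bar L$, compare objective values, or juggle $\Phi(L^*)-\Phi(\bar L)$ via concavity. Instead it uses one clean trick: for a $\mu$-strongly convex $f$ with minimizer $\bar L$, one has $\|x-\bar L\|_F \le \tfrac{1}{\mu}\,\mathrm{dist}\big(0,\partial f(x)\big)$ for \emph{any} $x$. Evaluating this at $x=L^*$ (not at $\bar L$) puts all the work into computing $\partial f(L^*)$, which is explicit: $\partial\Phi(L^*)=\{U\Sigma^\phi V^T+W:\ W\in\mathcal T^\perp,\ \|W\|_2\le 1\}$ with $\Sigma^\phi=\mathrm{diag}(\phi'(\sigma_i(L^*)))$. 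The free choice $W=-\lambda^{-1}\mathcal P_{\mathcal T^\perp}(\delta)$ (legitimate precisely because $\|\delta\|_2\le\lambda$) kills the off-tangent part of $\delta$ in one stroke, leaving $\|L^*-\bar L\|_F\le 2\lambda\|U\Sigma^\phi V^T\|_F+2\|\mathcal P_{\mathcal T}(\delta)\|_F\le 2\lambda\sqrt r\,\phi'(\sigma_r(L^*))+2\|\mathcal P_{\mathcal T}(\delta)\|_F$. No Hoffman--Wielandt, no singular-value bookkeeping for $\bar L$, no AM--GM.

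Your plan, by contrast, is anchored at $\bar L$, where neither the singular vectors nor the subdifferential are known, and this is exactly where you get stuck: your step~(3) is visibly unresolved (``wait --- concavity gives the reverse \ldots\ isn't quite it either''). Controlling $\Phi(L^*)-\Phi(\bar L)$ and simultaneously cancelling $\langle 2\delta,\mathcal P_{\mathcal T^\perp}\Delta_L\rangle$ against regulariser growth \emph{can} be made to work (it is the ``basic inequality'' route familiar from convex decomposable-norm analyses), but for a general amenable $\phi$ the decomposability is only approximate and the accounting is delicate---you would need something like $\Phi(\bar L)-\Phi(L^*)\ge \|\mathcal P_{\mathcal T^\perp}\bar L\|_*-r\,\phi'(\sigma_r(L^*))\cdot(\text{perturbation})$, which is not among the five amenability axioms and would have to be derived. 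The paper sidesteps all of this by moving the evaluation point to $L^*$, where the $\mathcal T^\perp$-freedom in the subdifferential does the cancellation for free. That is the missing idea.
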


In order to utilize the RIP and ROP conditions, we need to verify that $L^{k+1}$ is low rank and incoherent, and that $s^{k+1}$ is sparse, which we will do inductively.  Assume that $L^k$ is at most rank $r$, and that its tangent space is $2\mu$ incoherent.  Additionally, assume that $\text{supp}\big(S^k) \subseteq \text{supp}\big(S^*)$.  Clearly, these conditions are met at the first iteration by initializing the algorithm with $L^0=0$ and $S^0=0$.  

At iteration $k+1$, 
\begin{align*}
    L^*&-\bar{L}^{k+1}\\=&L^*-\big(L^k-\tau_L\frac{d_1d_2}{n} \mathcal{A}_L^*\big(\mathcal{A}_L(L^k) +A_ss^k-b)\big)\big) \\
    =& L^* - L^k +\tau_L\frac{d_1d_2}{n} \mathcal{A}_L^*\mathcal{A}_L L^k + \tau_L\frac{d_1d_2}{n}\mathcal{A}_L^*A_s s^k-\tau_L\frac{d_1d_2}{n}\mathcal{A}_L^*\big(\mathcal{A}_L(L^*) +A_s(s^*)+ \mathcal{E}\big) \\
    =& \big(\Delta_L^{k}- \tau_L\frac{d_1d_2}{n}\mathcal{A}_L^*\mathcal{A}_L \Delta_L^k\big) - \tau_L\frac{d_1d_2}{n} \big( \mathcal{A}_L^* A_s \Delta_S^k -\mathcal{A}_L^* \mathcal{E}\big) 
\end{align*}
where the first equality comes from the definition of $b$, and the second inequality substitutes $\Delta_S^k=S^*-S^k$ and $\Delta_L^k=L^*-L^k$.  By Lemma \ref{lem:low_rank_prox} (along with the triangle inequality), we have that
\begin{align*}
    ||\Delta_L^{k+1}||_F^2\leq & ||\mathcal{P}_\mathcal{T}\big(\Delta_L^{k}- \tau_L\frac{d_1d_2}{n}\mathcal{A}_L^*\mathcal{A}_L \Delta_L^k\big)||_F^2+\tau_L\frac{d_1d_2}{n}||\mathcal{P}_\mathcal{T} \mathcal{A}_L^* A_s \Delta_S^k ||_F^2\\& +\tau_L\frac{d_1d_2}{n}||\mathcal{P}_\mathcal{T} \mathcal{A}_L^* \mathcal{E} ||_F^2  +\lambda_Lr\phi'\big(\sigma_r(L^*)-2{\lambda_L}{\tau_L}\big) 
\end{align*}
Let $\mathcal{T}^k$ denote the union of the tangent space of the rank $r$ approximation $L^k$ and $\mathcal{T}$ so that $\Delta_L^k \in \mathcal{T}^k$.  Then,
\begin{align*}
||\mathcal{P}_\mathcal{T}\big(\Delta_L^{k}-\tau_L\frac{d_1d_2}{n} \mathcal{A}_L^*\mathcal{A}_L \Delta_L^k\big)||_F^2&\leq ||\mathcal{P}_{\mathcal{T}^k}\big(\Delta_L^{k}- \tau_L\frac{d_1d_2}{n}\mathcal{A}_L^*\mathcal{A}_L \Delta_L^k \big)||_F^2\\
&\leq ||\mathcal{P}_{T^k}\big(\tau_L\frac{d_1d_2}{n}\mathcal{A}_L^*\mathcal{A}_L -\mathcal{I}\big)\mathcal{P}_{\mathcal{T}^k}\Delta_L^k||_F^2 \leq \kappa_L ||\Delta_L^k||_F^2
\end{align*}
where the first inequality comes from the contractive property of $\mathcal{P}_\mathcal{T}$, and the second inequality comes from the fact that $\mathcal{P}_{\mathcal{T}^k} \Delta_L^k=\Delta_L^k$.  And, because $\mathcal{T}^k$ is has incoherence at most $3 \mu$, we can apply the low-rank RIP to obtain the third inequality.  

By the inductive hypothesis stating that $\text{supp}\big(S^k) \subseteq \text{supp}\big(S^*)$ (and thus, that $S^k$ is $\alpha$-sparse), we can use the ROP to claim that 
$$\frac{d_1d_2}{n}||\mathcal{P}_\mathcal{T} \mathcal{A}_L^* A_s \Delta_S^k ||_F^2 \leq \kappa ||\Delta_S^k ||_F^2.$$
Combining these gives the desired bound on $\Delta^k_L$.  

Now, we must show that $L^{k+1}$ is rank $r$ and $2\mu$ incoherent.  By Weyl's inequality (\cite{horn_johnson}, see Appendix A), we know that 
\begin{align*}
    \sigma_{r+1}(\tilde{L}^{k+1})\leq& \sigma_{r+1}(L^*)+||L^*-\tilde{L}^{k+1}||_2\\
    \leq & ||\Delta_s^{k}||_2+||\mathcal{A}_L^* \mathcal{E}||_2
\end{align*}
Because this is less than $\lambda_L$ by our assumptions, $L^{k+1}$ must be rank $r$. 

In order to show that $\tilde{L}^{k+1}$ is at most $2\mu$ incoherent, we apply the following Lemma, which uses the Davis-Kahan inequality \cite{davis-kahan}.
\begin{lemma}\label{lem:incoherence}
If $X\in \mathbb{R}^{d_1\times d_2}$ (with $d_1 \geq d_2$) is a rank $r$, $\mu-$ incoherent matrix, and $\Delta\in \mathbb{R}^{d_1\times d_2}$ satisfies $||\Delta||_2 \leq \frac{1}{2} \frac{\mu r}{d_1}\sigma_r(X)$, then the top $r$ singular vectors of the matrix $X+\Delta$ form a $2\mu-$ incoherent tangent space. 
\end{lemma}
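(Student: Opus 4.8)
The plan is to control how much the top-$r$ left and right singular subspaces of $X$ move when $\Delta$ is added, and then to read off the incoherence of the perturbed subspaces from the diagonals of the corresponding orthogonal projectors. Write the thin SVD $X=U\Sigma V^{T}$ with $U\in\mathbb{R}^{d_1\times r}$, $V\in\mathbb{R}^{d_2\times r}$; the $\mu$-incoherence hypothesis says precisely that every diagonal entry of $UU^{T}$ is at most $\mu r/d_1$ and every diagonal entry of $VV^{T}$ is at most $\mu r/d_2$. Let $\widehat U,\widehat V$ denote the top-$r$ left and right singular vectors of $X+\Delta$. The elementary but crucial observation is that for any symmetric matrix $M$ one has $|M_{ii}|\le\|M\|_2$, so it is enough to prove
\[
\|\widehat U\widehat U^{T}-UU^{T}\|_2\le \frac{\mu r}{d_1}\qquad\text{and}\qquad \|\widehat V\widehat V^{T}-VV^{T}\|_2\le \frac{\mu r}{d_2},
\]
because then $\|\widehat U_{i:}\|^2=(\widehat U\widehat U^{T})_{ii}\le (UU^{T})_{ii}+\|\widehat U\widehat U^{T}-UU^{T}\|_2\le 2\mu r/d_1$, and likewise for $\widehat V$.

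First I would check that the top-$r$ singular subspace of $X+\Delta$ is unambiguous and that there is a usable spectral gap. We may assume $\mu r< d_1$, since otherwise the incoherence hypothesis on $X$ and the conclusion on $X+\Delta$ are both vacuous (every row of a matrix with orthonormal columns has squared norm at most $1$). Then $\|\Delta\|_2\le\tfrac12\tfrac{\mu r}{d_1}\sigma_r(X)<\tfrac12\sigma_r(X)$, so Weyl's inequality gives $\sigma_r(X+\Delta)\ge\sigma_r(X)-\|\Delta\|_2\ge\tfrac12\sigma_r(X)$ and $\sigma_{r+1}(X+\Delta)\le\sigma_{r+1}(X)+\|\Delta\|_2=\|\Delta\|_2\le\tfrac12\sigma_r(X)$, so the $r$-th and $(r+1)$-th singular values of $X+\Delta$ are separated by at least $\sigma_r(X)-2\|\Delta\|_2>0$.

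Next I would invoke the Davis--Kahan / Wedin $\sin\Theta$ theorem for singular subspaces, equivalently applying the Hermitian Davis--Kahan theorem to the dilation $\left(\begin{smallmatrix}0&X\\X^{T}&0\end{smallmatrix}\right)$ and its perturbation $\left(\begin{smallmatrix}0&\Delta\\ \Delta^{T}&0\end{smallmatrix}\right)$ (of operator norm $\|\Delta\|_2$) with eigenvalue cluster $[\sigma_r(X),\infty)$. Since $X$ has rank exactly $r$, every eigenvalue of the perturbed dilation outside the top cluster has magnitude at most $\|\Delta\|_2$ while those inside are at least $\sigma_r(X)$, so Davis--Kahan gives
\[
\|\widehat U\widehat U^{T}-UU^{T}\|_2=\|\sin\Theta_U\|\le\frac{\|\Delta\|_2}{\sigma_r(X)-\|\Delta\|_2},\qquad \|\widehat V\widehat V^{T}-VV^{T}\|_2=\|\sin\Theta_V\|\le\frac{\|\Delta\|_2}{\sigma_r(X)-\|\Delta\|_2}.
\]
Substituting $\|\Delta\|_2\le\tfrac12\tfrac{\mu r}{d_1}\sigma_r(X)$ and $\sigma_r(X)-\|\Delta\|_2\ge\tfrac12\sigma_r(X)$ bounds each right-hand side by $\mu r/d_1$; this already suffices for the $U$-subspace, and for the $V$-subspace we use $d_1\ge d_2$ to get $\mu r/d_1\le\mu r/d_2$. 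Combining with the reduction of the first paragraph completes the proof.

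The hard part will be tracking constants. Some standard formulations of the $\sin\Theta$ bound, as well as the crude ``principal submatrix of the dilation'' reduction, lose an extra factor of $2$, and the hypothesis $\|\Delta\|_2\le\tfrac12\tfrac{\mu r}{d_1}\sigma_r(X)$ is calibrated so that a constant-$1$ estimate lands exactly on $2\mu$-incoherence — a factor-$2$ loss would only yield $3\mu$. So the step demanding care is extracting the sharp form of Davis--Kahan: exploiting that $\sigma_{r+1}(X)=0$ so the gap can be taken all the way up to $\sigma_r(X)-\|\Delta\|_2$, and using the exact identity $\|\widehat U\widehat U^{T}-UU^{T}\|_2=\|\sin\Theta_U\|$ rather than a lossy block bound. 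Everything else is routine.
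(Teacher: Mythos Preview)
Your proposal is correct and follows the same Davis--Kahan strategy as the paper's proof. Your reduction from the $\sin\Theta$ bound to the row-norm incoherence condition via the diagonal entries of the projector $\widehat U\widehat U^{T}$ is in fact tidier than the paper's version, which argues through entrywise bounds on individual perturbed singular vectors and does not explicitly treat the right singular subspace.
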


Next, we  bound $\Delta_s^k$ and showing $s^{k+1}\in \Omega$ in a similar manner.  We present the following Lemma, which mirrors Lemma \ref{lem:low_rank_prox}.
\begin{lemma} \label{lem:sparse_prox}
Let $s^*\in \mathbb{R}^{d_s}$ be an sparse vector with support $\Omega$, and let $\bar{S}$ be defined as 
\begin{equation}
    \bar{s}=\underset{s\in \mathbb{R}^{d_s}}{\text{argmin }}\; \lambda \phi(s)+||s-s^*+\delta||^2
\end{equation}
Where $\phi$ is an at most $\frac{1}{\lambda}$ weakly convex amenable regualarizer, and $\delta \in \mathbb{R}^{d_s}$ satisfies $||\delta||_\infty \leq \lambda$.  Then, $\text{supp}(\bar{s})\subseteq \Omega$. Furthermore, we can bound the difference $\Delta_s=\bar{s}-s^*$ as
\begin{equation}
    ||\Delta_s||^2_F \leq ||\mathcal{P}_\Omega\big(\delta\big)||_F+2 \lambda \sqrt{|\Omega|} \phi'\big(s_{min}-\lambda\big).
    \end{equation}
\end{lemma}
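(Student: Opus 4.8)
The plan is to exploit the coordinatewise separability of the objective, just as Proposition~\ref{prop:singular_value_prox} reduced the low-rank proximal map to scalar problems. Writing $y := s^* - \delta$, both $\phi(s)=\sum_i\phi(s_i)$ and $||s-y||^2=\sum_i(s_i-y_i)^2$ decouple, so $\bar s_i = \text{argmin}_t\, \lambda\phi(t)+(t-y_i)^2 = \text{prox}_{\phi}^{\tau}(y_i)$ for the relevant step size $\tau$; since $\phi$ is at most $\tfrac1\lambda$-weakly convex, each scalar problem is strongly convex and $\bar s$ is its unique minimizer.

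First I would prove $\text{supp}(\bar s)\subseteq\Omega$. For $i\notin\Omega$ we have $s^*_i=0$ and hence $|y_i|=|\delta_i|\le\lambda$; by the thresholding property of the proximal operator of an amenable regularizer --- which follows from $\phi(0)=0$, monotonicity, and $\lim_{t\to0^+}\phi'(t)=1$ in Definition~\ref{as:amenable}, so that $0$ lies in the subdifferential $\lambda\partial\phi(0)$ and dominates the linear pull whenever $|y_i|$ is below the threshold --- the scalar minimizer is $0$. Thus $\bar s_i=0$ for all $i\notin\Omega$, which gives the support claim and also $\Delta_s=\mathcal{P}_\Omega(\Delta_s)$.

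Next I would bound $||\Delta_s||$ by comparing $\bar s$ against $s^*$. Optimality of $\bar s$ gives $\lambda\phi(\bar s)+||\Delta_s+\delta||^2\le\lambda\phi(s^*)+||\delta||^2$; expanding the square and cancelling $||\delta||^2$ yields
\[
  ||\Delta_s||^2 \;\le\; \lambda\big(\phi(s^*)-\phi(\bar s)\big)\;-\;2\langle\Delta_s,\delta\rangle .
\]
Because $\Delta_s$ is supported on $\Omega$, Cauchy--Schwarz bounds the cross term by $2||\Delta_s||\,||\mathcal{P}_\Omega(\delta)||$. For the penalty difference, I would first record the crude estimate $||\Delta_s||_\infty = O(\lambda)$ --- read off from the stationarity relation $\bar s = y - \tfrac\lambda2 g$ with $g\in\partial\phi(\bar s)$, $|g_i|\le1$, and $||\delta||_\infty\le\lambda$ --- so that every active coordinate satisfies $|\bar s_i|\ge s_{min}-\lambda>0$. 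Then concavity of $\phi$ on $\mathbb{R}_+$ (so that $\phi'$ is non-increasing) gives $\phi(s^*_i)-\phi(\bar s_i)\le\phi'(s_{min}-\lambda)\,|\Delta_{s,i}|$, hence $\phi(s^*)-\phi(\bar s)\le\phi'(s_{min}-\lambda)\,||\Delta_s||_1\le\sqrt{|\Omega|}\,\phi'(s_{min}-\lambda)\,||\Delta_s||$. Substituting and dividing by $||\Delta_s||$ produces a linear bound on $||\Delta_s||$, which after squaring gives the stated estimate (up to universal constants, and using $|\Omega|\le\alpha d_s$ in the application to Theorem~\ref{thm:main}).

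I expect the delicate point to be the handling of $\phi(s^*)-\phi(\bar s)$: this is where the amenable structure genuinely enters, and it hinges on first pinning $\bar s$ close enough to $s^*$ --- via the crude $\ell_\infty$ estimate and the hypothesis $||\delta||_\infty\le\lambda$ --- that every active coordinate stays bounded away from the origin, so that the small derivative $\phi'(s_{min}-\lambda)$ of the concave penalty, rather than its slope $1$ near $0$, controls the bias term. The support-containment and Cauchy--Schwarz steps are routine, and the overall argument parallels the proof of Lemma~\ref{lem:low_rank_prox} with singular values replaced by coordinates and $\mathcal{P}_\mathcal{T}$ by $\mathcal{P}_\Omega$.
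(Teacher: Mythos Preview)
Your proposal is correct. The support-containment step matches the paper exactly: both argue coordinatewise that for $i\notin\Omega$ the input satisfies $|y_i|=|\delta_i|\le\lambda$, so $0$ lies in the subdifferential of the (strongly convex) scalar objective and $\bar s_i=0$.

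For the error bound the paper takes a slightly more direct route than you do. Instead of comparing objective values at $\bar s$ and $s^*$, it simply reads off the coordinatewise first-order condition $\bar s_i = s^*_i - \delta_i - \tfrac{\lambda}{2}\phi'(\bar s_i)$ (up to the constant coming from the missing $\tfrac12$ in the quadratic), invokes the same crude $\ell_\infty$ localization $|\bar s_i|\ge s_{min}-O(\lambda)$ that you record, replaces $\phi'(\bar s_i)$ by $\phi'(s_{min}-\lambda)$ via monotonicity of $\phi'$, and then sums squares over $\Omega$. Your variational inequality $f(\bar s)\le f(s^*)$ followed by the concavity bound on $\phi(s^*)-\phi(\bar s)$ arrives at the same estimate with one extra step (Cauchy--Schwarz on $||\Delta_s||_1$ and a divide-through by $||\Delta_s||$); it has the mild aesthetic advantage of paralleling the strong-convexity argument used for Lemma~\ref{lem:low_rank_prox}, whereas the paper's coordinatewise FOC is shorter and gives the pointwise bound $|\bar s_i - s^*_i|\le|\delta_i|+\lambda\phi'(s_{min}-\lambda)$ directly.
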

By the update equation for $\tilde{s}^{k+1}$, we have
\begin{align*}
    s^*-\tilde{s}^{k+1}=& s^*-\big(s^k- \mathcal{A}_s^*\big(\mathcal{A}_s(s^k) +\mathcal{A}_L(L^k)-b)\big)\big) \\
    =& s^* - s^k + \frac{\tau_s}{d_s}\mathcal{A}_s^*\mathcal{A}_s s^k + \frac{\tau_s}{d_s}\mathcal{A}_L^*\mathcal{A}_L L^k-\frac{\tau_s}{d_s}A_s^*\big(\mathcal{A}_s(s^*) +\mathcal{A}_L(L^*)+ \mathcal{E}\big) \\
    =& \big(\Delta_s^{k}- \frac{\tau_s}{d_s}\mathcal{A}_s^*\mathcal{A}_s \Delta_s^k\big) - \frac{\tau_s}{d_s} \mathcal{A}_s^* \mathcal{A}_L \Delta_L^{k+1} -\frac{\tau_s}{d_s}\mathcal{A}_s^* \mathcal{E}
\end{align*}
By Lemma \ref{lem:sparse_prox},
\begin{align*}
    ||\Delta_s^{k+1}||_F^2\leq & ||\mathcal{P}_\Omega\big(\Delta_s^{k}- \mathcal{A}_s^*\mathcal{A}_s \Delta_s^k\big)||_F^2+||\mathcal{P}_\Omega \mathcal{A}_s^* \mathcal{A}_L \Delta_L^{k+1} ||_F^2 \\&+||\mathcal{P}_\Omega \mathcal{A}_s^* \mathcal{E} ||_F^2 +\lambda_s|\Omega| \phi_s'(s_{min}-\lambda_s)
\end{align*}
Applying the RIP to the first term gives:
$$||\mathcal{P}_\Omega\big(\Delta_s^{k}-\frac{\tau_s}{d_s} \mathcal{A}_s^*\mathcal{A}_s \Delta_s^k\big)||_F^2\leq \kappa_s||\Delta^{k}_s||_F^2$$
And, applying the ROP to the second term gives us:
$$||\frac{\tau_s}{d_s}\mathcal{P}_\Omega \mathcal{A}_s^* \mathcal{A}_L \Delta_L^{k+1} ||_F^2=||\frac{\tau_s}{d_s}\mathcal{P}_\Omega \mathcal{A}_s^* \mathcal{A}_L\mathcal{P}_{T^{k+1}} \Delta_L^{k+1} ||_F^2 \leq \frac{\tau_s}{d_s}\kappa||\Delta_L^{k+1} ||_F^2 $$
Combining these terms gives us the desired result.

\subsection{Proofs of Supporting Lemmas}

\begin{proof}[Proof of Lemma \ref{lem:sparse_prox}]

 First, we will show that if $s^*_{i} = 0$, then $\bar{s}_{i} = 0$.  For $i$ not in $\Omega$, 
 $$\bar{s}_i =\text{argmin}_s \;\;\lambda \phi(s)+\frac{1}{2}(s-\delta_i)^2$$
The sub-gradient of the objective function evaluated at zero is 
$\{\delta_i+g\;\big|\; |g|\leq \lambda\}$.  By our assumption that $\lambda\geq \delta_i$, 0 must be in this set, and so $\bar{s}_i=0$ is a stationary point.  Because the objective function is strongly convex by assumption, this is a global minimizer.

 Next, consider if $i \in \Omega$.  By first order necessary conditions for optimality, 
\begin{align*}
\bar{s}_i=s_{i}^*+ \delta_i -\phi'(\bar{s}_i) 
\leq s_{i}^*- \delta_i -\phi'(s_{ij}^*+ \delta_i)
\end{align*}
This gives the bound:
$$|\bar{s}_i-s_i^*|\leq |\delta_i|+\phi'(s_{min}-\lambda)$$
Combining these facts, 
\begin{align*}
    ||\bar{s}-s^*||^2=||\mathcal{P}_\Omega(\bar{s}-s^*)||^2
    \leq||\mathcal{P}_\Omega(\delta)||^2+|\Omega| \phi'(s_{min}-\lambda)^2
\end{align*}

\end{proof}

\begin{proof}[Proof of Lemma \ref{lem:low_rank_prox}]
Let 
\begin{align*}
    f(L)=\lambda \Phi(L)+\frac{1}{2}||L-L^*+\delta||_F^2
\end{align*}
be the function that $\bar{L}$ is the global minimizer of, that is, $0 \in \partial f(\bar{L})$.  The subdifferential of $f$ at $L^*$ is as follows: 
$$\big\{\lambda\big(U\Sigma^\phi  V^T+W\big)+\delta\:\bigg|\: W\in T^\perp,||W||_2 \leq 1 \big\} $$
where $L^*=U\Sigma V^T$ and $$\Sigma^\phi=\text{diag}(\phi'(\sigma_1(L^*)),\phi'(\sigma_2(L^*)),\ldots, \phi'(\sigma_r(L^*))).$$  
Because $\Phi$ is $\frac{1}{2\lambda}$-weakly convex, $f$ is $\frac{1}{2}$-strongly convex, which gives us:
\begin{align*}
||L^*-\bar{L}||_F\leq &\inf 2 ||\partial f(L^*)||_F.
\end{align*}
Consider the subgradient given by $W=-\frac{1}{\lambda}\mathcal{P}_{\mathcal{T}^\perp}(\delta)$.  Note the that $||W||_2\leq 1$ as we assume $\lambda \geq ||\delta||_2$.
\begin{align*}
||L^*-\bar{L}||_F\leq &2 ||\lambda U\Sigma^\phi  V^T+\delta-\mathcal{P}_{\mathcal{T}^\perp}(\delta)||_F\\
\leq&2 \lambda|| U\Sigma^\phi  V^T||_F+2||\delta-\mathcal{P}_{\mathcal{T}^\perp}(\delta)||_F\\
\leq&2 \lambda\sqrt{r} \phi'(\sigma_r(L^*))+2||\mathcal{P}_{\mathcal{T}}(\delta)||_F
\end{align*}
The second inequality is the triangle inequality, and the third uses the fact that $||\Sigma^\phi||_2 =\phi'(\sigma_r(L^*))$
\end{proof}

\begin{proof}[Proof of Lemma \ref{lem:incoherence}]
Let ${U}\in \mathbb{R}^{d_1\times r}$ and $\tilde{U}\in \mathbb{R}^{d_1\times r}$ denote the (top $r$) left singular vectors of $X$ and $X+\Delta$ respectively.  By the Davis Kahan theorem, 
$$ \text{dist}\big(U, \tilde{U}\big) \leq \frac{||\Delta||_2}{\sigma_r(X)+ ||\Delta||_2} \leq \frac{1}{2}\frac{\mu r}{d_1}$$
where the second inequality uses the fact that $||\Delta||_2\leq \frac{\mu r}{2d_1} \sigma_r(X)$
Let $u_i$ and $\tilde{u}_i$ be the $i^{th}$ left singular vectors of $X$ and $X+\Delta$ respectively, and let $\theta_i$ be the angle between the vectors.  
$$\max(\tilde{u}_i) \leq \max(u_i)+2 \sin (\theta_i) \leq \frac{\mu r}{d_1}+\frac{\mu r}{d_1} =\frac{2 \mu r}{d_1}$$
where we use the fact that$$\text{dist}\big(U, \tilde{U}\big) =\text{max}_i \: \sin(\theta_i) $$ So, the rank $r$ approximation of $X+\Delta$ is $2\mu$ incoherent.  
\end{proof}
\section{Results for Specific Models}
In this section, we use Theorem \ref{thm:main} to analyze an application of the alternating proximal gradient descent algorithm to the problems of matrix completion and RPCA.
\subsection{Matrix Completion}
We start by considering the problem of matrix completion.  Here, we have a sparse set of observed entries of $L$, $\mathcal{A}_L =\mathcal{A}_\Omega$, and we do not consider a sparse vector $s$ (i.e. $A_s=0$).

We present a version of the RIP for the sampling operator from \cite{recht_2011}.  

\begin{lemma}[\cite{recht_2011}] \label{lem:recht}
Let $\Omega$ be a set of $n$ entries of $  \{1,\ldots, d_1\} \times \{1,\ldots,d_2\}$ drawn independently at random with uniform probability, with $n> 64 \mu r (d_1+d_2) \log(d_2)$.  Then, with probability at least $1-2d_2^{-2}$, 
$$ \frac{5}{6}||X||_F^2 \leq \frac{d_1d_2}{n} ||\mathcal{A}_\Omega (X)||^2 \leq \frac{7}{6} ||X||_F^2$$
for any rank $r$, $\mu-$incoherent matrix $X$.
\end{lemma}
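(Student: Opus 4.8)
The plan is to read Lemma~\ref{lem:recht} as a concentration bound for a random operator and to establish it via the matrix Bernstein inequality, following \cite{recht_2011}. Let $\mathcal{R}_\Omega$ denote the sampling operator on $\mathbb{R}^{d_1\times d_2}$, $\mathcal{R}_\Omega(M)=\sum_{k=1}^n\langle e_{i_k}e_{j_k}^T,M\rangle\, e_{i_k}e_{j_k}^T$, so that $\frac{d_1d_2}{n}||\mathcal{A}_\Omega(X)||^2=\langle X,\frac{d_1d_2}{n}\mathcal{R}_\Omega(X)\rangle$. For a fixed rank-$r$, $\mu$-incoherent tangent space $\mathcal{T}$, the asserted two-sided inequality, valid for all $X\in\mathcal{T}$ simultaneously, is exactly the statement
$$\Big|\Big|\,\mathcal{P}_\mathcal{T}\big(\tfrac{d_1d_2}{n}\mathcal{R}_\Omega-\mathcal{I}\big)\mathcal{P}_\mathcal{T}\,\Big|\Big|_{\mathrm{op}}\le \tfrac16 ,$$
since $\tfrac{d_1d_2}{n}\mathcal{R}_\Omega-\mathcal{I}$ is self-adjoint. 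As in \cite{recht_2011} I would prove this for a \emph{fixed} $\mathcal{T}$ (it is the tangent space of the ground-truth $L^*$, determined before $\Omega$ is drawn); a bound uniform over all incoherent tangent spaces would additionally require an $\varepsilon$-net argument over the Grassmannian and is not needed for Theorem~\ref{thm:main}.

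Next I would decompose the centered operator into an average of i.i.d.\ pieces. Since $(i_k,j_k)$ is uniform on $\{1,\dots,d_1\}\times\{1,\dots,d_2\}$, the random operator $\mathcal{Y}_k(\cdot):=d_1d_2\,\mathcal{P}_\mathcal{T}(e_{i_k}e_{j_k}^T)\,\langle\mathcal{P}_\mathcal{T}(e_{i_k}e_{j_k}^T),\cdot\rangle$ satisfies $\mathbb{E}\,\mathcal{Y}_k=\mathcal{P}_\mathcal{T}$, hence $\frac{d_1d_2}{n}\mathcal{P}_\mathcal{T}\mathcal{R}_\Omega\mathcal{P}_\mathcal{T}-\mathcal{P}_\mathcal{T}=\frac1n\sum_{k=1}^n\mathcal{Z}_k$ with $\mathcal{Z}_k=\mathcal{Y}_k-\mathcal{P}_\mathcal{T}$ self-adjoint, mean zero, and supported on a subspace of dimension $\dim\mathcal{T}\le r(d_1+d_2)$. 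The incoherence hypothesis enters here and only here, through the estimate $||\mathcal{P}_\mathcal{T}(e_ae_b^T)||_F^2\le||U^Te_a||^2+||V^Te_b||^2\le \frac{\mu r}{d_1}+\frac{\mu r}{d_2}$; this gives the almost-sure bound $||\mathcal{Z}_k||_{\mathrm{op}}\le d_1d_2\max_{a,b}||\mathcal{P}_\mathcal{T}(e_ae_b^T)||_F^2+1\le 2\mu r(d_1+d_2)=:R$ and, using $\mathcal{Y}_k^2\preceq \big(d_1d_2\max_{a,b}||\mathcal{P}_\mathcal{T}(e_ae_b^T)||_F^2\big)\,\mathcal{Y}_k$, the variance bound $\big|\big|\sum_k\mathbb{E}\,\mathcal{Z}_k^2\big|\big|_{\mathrm{op}}\le n\,\mu r(d_1+d_2)=:n\sigma^2$.

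Finally I would invoke the matrix Bernstein inequality (equivalently the Ahlswede--Winter operator Chernoff bound used in \cite{recht_2011}): $\mathbb{P}\big[||\frac1n\sum_k\mathcal{Z}_k||_{\mathrm{op}}\ge t\big]\le 2\,r(d_1+d_2)\exp\!\big(\frac{-nt^2/2}{\sigma^2+Rt/3}\big)$. Setting $t=\frac16$, the exponent is $-\Theta\!\big(n/(\mu r(d_1+d_2))\big)$, so the hypothesis $n>64\,\mu r(d_1+d_2)\log d_2$ pushes the right-hand side below $2d_2^{-2}$ once the numerical constants and the factor $\log\big(r(d_1+d_2)\big)=O(\log d_2)$ are tracked. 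I expect the main obstacle to be purely quantitative: making the variance proxy $\sigma^2$ and the a.s.\ bound $R$ sharp enough---via the incoherence estimate on $||\mathcal{P}_\mathcal{T}(e_ae_b^T)||_F$---that the Bernstein exponent overcomes the $r(d_1+d_2)$ dimensional prefactor at precisely the stated threshold and confidence level. The only conceptual points to keep straight are that the sampling is with replacement and that $\mathcal{T}$ is held fixed before drawing $\Omega$.
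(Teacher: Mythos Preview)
The paper does not give its own proof of Lemma~\ref{lem:recht}; the lemma is stated with the attribution \cite{recht_2011} and simply invoked as a black box when deriving Theorem~\ref{thm:mc}. So there is no in-paper argument to compare against.

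That said, your sketch is the right one and matches the argument in the cited source: reformulate the two-sided isometry as an operator-norm bound $\|\mathcal{P}_\mathcal{T}(\tfrac{d_1d_2}{n}\mathcal{R}_\Omega-\mathcal{I})\mathcal{P}_\mathcal{T}\|_{\mathrm{op}}\le\tfrac16$, write the centered operator as a sum of i.i.d.\ mean-zero self-adjoint pieces, use incoherence to control $\|\mathcal{P}_\mathcal{T}(e_ae_b^T)\|_F^2$ and hence both the a.s.\ range $R$ and the variance proxy $\sigma^2$, and finish with the matrix Bernstein (Ahlswede--Winter) inequality. Your observation that the proof is for a \emph{fixed} tangent space $\mathcal{T}$, chosen before $\Omega$ is drawn, is also correct and worth making explicit: the paper's phrasing ``for any rank $r$, $\mu$-incoherent matrix $X$'' reads as if the bound were uniform over all such $X$, but only the fixed-$\mathcal{T}$ version is proved in \cite{recht_2011} and only that version is needed downstream in Theorem~\ref{thm:main}. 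The one place to be careful with constants is the dimensional prefactor in Bernstein, which in \cite{recht_2011} is taken as $d_1+d_2$ (the ambient dimension of the operator viewed on $\mathbb{R}^{d_1\times d_2}$) rather than $\dim\mathcal{T}$; either works, but tracking the constant $64$ and the probability $1-2d_2^{-2}$ exactly requires following the source's bookkeeping.
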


Additionally, we will assume that the additive noise $\mathcal{E}$ has entries that are mean zero i.i.d.\ variables.  The effect the noise has on the estimator is reduced due to the fact that very little of $\mathcal{A}^*_\Omega(\mathcal{E})$ will lie in the tangent space $\mathcal{T}$.  To formalize this intuition, we cite the following lemma from \cite{zhang_wang_2018}.

\begin{lemma}
\label{lem:noise} Assume that the entries of $\Omega$ are chosen uniformly at random from $\{1,\ldots,d_1\}\times\{1,\ldots,d_2\}$, with $d=\text{max}(d_1,d_2)$, and the entries of $\mathcal{E}$ are mean zero i.i.d. random variables with variance $\nu^2$. For some universal constants $C_1$ and $C_2$, the following hold:
$$||\mathcal{A}^*_\Omega(\mathcal{E})||_2 \leq C_1 \nu \sqrt{p d log(d)} \;\;\text{ and }\;\;||\mathcal{A}^*_\Omega(\mathcal{E})||_\infty\leq C_2 \nu \sqrt{p log(d)}$$
\end{lemma}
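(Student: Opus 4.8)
Since Lemma~\ref{lem:noise} is quoted from \cite{zhang_wang_2018}, my plan is to recover it from standard random--matrix concentration; throughout, $p=n/(d_1d_2)$ denotes the per--entry sampling density and I take the entries of $\mathcal{E}$ to be sub-Gaussian with parameter of order $\nu$, as is implicit in the cited work. The central object is the random matrix $W:=\mathcal{A}^*_\Omega(\mathcal{E})\in\mathbb{R}^{d_1\times d_2}$, whose $(i,j)$ entry is the corresponding noise value if $(i,j)\in\Omega$ and $0$ otherwise. It is convenient to pass to the Bernoulli model, so that $W_{ij}=\delta_{ij}\mathcal{E}_{ij}$ with $\delta_{ij}\sim\mathrm{Bernoulli}(p)$ and all variables independent; then the entries of $W$ are independent, mean zero, and of variance $p\nu^2$.

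For the spectral bound I would write $W=\sum_{(i,j)}\delta_{ij}\mathcal{E}_{ij}\,e_ie_j^{\!\top}$ as a sum of independent, mean--zero matrices and invoke the matrix Bernstein inequality. Since sub-Gaussian noise is unbounded, I would first truncate each $\mathcal{E}_{ij}$ at level $c\nu\sqrt{\log d}$: a union bound over the $d_1d_2$ entries shows the truncation changes nothing outside an event of probability $O(d^{-2})$, and the induced shift of the mean is of lower order. On the truncated event each summand has operator norm at most $R\asymp\nu\sqrt{\log d}$, while
\[
\bigl\|\mathbb{E}[WW^{\!\top}]\bigr\|=p\nu^2 d_2,\qquad \bigl\|\mathbb{E}[W^{\!\top}W]\bigr\|=p\nu^2 d_1,
\]
as both expectations are scalar multiples of the identity. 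Matrix Bernstein with $t=C_1\nu\sqrt{pd\log d}$ then yields an exponent of order $-C_1^2 p\nu^2 d\log d\,/\,(p\nu^2 d+Rt)$, and in the sampling regime forced by Lemma~\ref{lem:recht} the variance term $p\nu^2 d$ dominates the denominator, so the deviation probability is $(d_1+d_2)\,d^{-cC_1^2}\le d^{-2}$ for $C_1$ a large enough absolute constant; alternatively one may replace matrix Bernstein by the sharper estimate $\mathbb{E}\|W\|_2\lesssim\mathbb{E}\max_i\|W_{i\cdot}\|_2+\mathbb{E}\max_j\|W_{\cdot j}\|_2$ together with a Talagrand-type concentration, which removes the need for any regime condition. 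Either way, $\|\mathcal{A}^*_\Omega(\mathcal{E})\|_2\le C_1\nu\sqrt{pd\log d}$.

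For the entrywise bound I would argue coordinatewise: $W_{ij}=\delta_{ij}\mathcal{E}_{ij}$ is mean zero with variance $p\nu^2$ and sub-exponential tails, so Bernstein's scalar inequality gives $\mathbb{P}\bigl(|W_{ij}|>t\bigr)\le 2\exp\!\bigl(-c\min\{t^2/(p\nu^2),\,t/\nu\}\bigr)$; taking $t=C_2\nu\sqrt{p\log d}$ and a union bound over the $d_1d_2$ coordinates controls $\|W\|_\infty=\max_{i,j}|W_{ij}|$ by the stated quantity on an event of probability $\ge 1-O(d^{-2})$. Intersecting the two good events proves the lemma.

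The step I expect to demand the most care is the interaction between the unboundedness of (sub-)Gaussian noise and the matrix Bernstein inequality: one must truncate, verify that the discarded mass and the mean shift are genuinely lower order, and keep the probability bookkeeping so that the logarithm enters as $\sqrt{\log d}$ inside the bound rather than as $\log d$ multiplying it --- precisely the distinction the cited computation in \cite{zhang_wang_2018} is built to track.
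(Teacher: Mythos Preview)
The paper does not prove this lemma at all; it simply cites it from \cite{zhang_wang_2018}. So there is no ``paper's proof'' to compare against, and your plan to supply a standard random--matrix argument is exactly what is needed. Your treatment of the spectral--norm bound via truncation plus matrix Bernstein (with the variance proxies $p\nu^2 d_1$ and $p\nu^2 d_2$) is correct and is the standard route.

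Your $\ell_\infty$ argument, however, does not establish the bound as printed, and in fact that bound cannot hold. You apply scalar Bernstein to $W_{ij}=\delta_{ij}\mathcal{E}_{ij}$ with variance $p\nu^2$ and sub-exponential scale $\nu$, then set $t=C_2\nu\sqrt{p\log d}$. At this level the second branch dominates: $t/\nu=C_2\sqrt{p\log d}$, so the tail bound is only $\exp\bigl(-cC_2\sqrt{p\log d}\bigr)$, which is useless after a union bound over $d_1d_2$ coordinates once $p\ll 1$. More directly, whenever $\Omega$ is nonempty one has
\[
\|\mathcal{A}_\Omega^*(\mathcal{E})\|_\infty=\max_{1\le k\le n}|\mathcal{E}_k|,
\]
the maximum of $n$ i.i.d.\ sub-Gaussians with parameter $\nu$, which is of order $\nu\sqrt{\log n}$ and certainly not $o(\nu)$; the extra factor $\sqrt{p}$ in the stated bound is impossible. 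The correct high--probability conclusion is $\|\mathcal{A}_\Omega^*(\mathcal{E})\|_\infty\le C_2\nu\sqrt{\log d}$, obtained from $\mathbb{P}(|W_{ij}|>t)=p\,\mathbb{P}(|\mathcal{E}_{ij}|>t)\le 2p\exp(-ct^2/\nu^2)$ followed by a union bound over $d_1d_2$ entries. This is also what the downstream uses in the paper actually need (e.g.\ the hypothesis $\lambda_s\ge \tfrac{\mu r}{d_1d_2}+\|\mathcal{E}\|_\infty$ in Theorem~\ref{thm:rpca}). So the gap is in the statement you are trying to reproduce; your method is fine once the target is corrected to $C_2\nu\sqrt{\log d}$.
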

We now present an error bound of for a stationary point of Algorithm 1 applied to matrix completion.
\begin{theorem} \label{thm:mc}
Let $b= \mathcal{A}_\Omega X^*+\mathcal{E}$ for a rank $r$, $\mu-$ incoherent matrix $X^* \in \mathbb{R}^{d_1\times d_2}$, and the entries of $\mathcal{E}$ are mean zero i.i.d.\ random variables with variance $\nu^2$.  There exists universal constants $C_1$ and $C_2$ such that under the same assumptions as Lemma \ref{lem:recht}, if $\lambda> C_1 \nu \sqrt{p d \text{ log}(d)}$, then the iterates of Algorithm 1 linearly converge to a point $\bar{X}$ such that $||X^*-\bar{X}||_F^2 $ is less than: 
$$C_2 \frac{d_1d_2}{n}\big(\underbrace{ r\nu^2 d \text{ log}(d)}_{\begin{array}{c}
      \text{ optimal} \\
     \text{error rate}
\end{array}}+ \underbrace{ \lambda r \phi'(\sigma_{r}(X^*))}_{\text{bias term}}\big)$$
with convergence rate $\frac{1}{6}\frac{d_1d_2}{n}$. 
\end{theorem}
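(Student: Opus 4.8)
The plan is to derive Theorem \ref{thm:mc} as a corollary of the general recursion in Theorem \ref{thm:main}. In the matrix-completion setting $\mathcal{A}_L=\mathcal{A}_\Omega$ and there is no sparse component, so $A_s=0$, $\Delta_s^k=0$ for every $k$, the sparse RIP is vacuous, and the restricted orthogonality constant is $\kappa=0$. First I would check that the remaining hypotheses of Theorem \ref{thm:main} hold. Theorem \ref{thm:main} invokes the low-rank RIP at parameters $(2r,3\mu)$ (because the union $\mathcal{T}^k$ of the tangent spaces of $L^k$ and $X^*$ has rank at most $2r$ and incoherence at most $3\mu$), so I would apply Lemma \ref{lem:recht} with $2r$ and $3\mu$ in place of $r$ and $\mu$; this gives the $(2r,3\mu,\kappa_L)$ low-rank RIP with an $O(1)$ constant $\tau_L$ and $\kappa_L\le\tfrac16<1$, at the cost only of enlarging the universal sampling constant (which is what ``the same assumptions as Lemma \ref{lem:recht}'' is meant to cover). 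The remaining hypothesis $\lambda_L\ge\|\mathcal{A}_\Omega^*(\mathcal{E})\|_2$ follows from the spectral-norm estimate in Lemma \ref{lem:noise} together with the standing assumption $\lambda>C_1\nu\sqrt{pd\log d}$, for $C_1$ as in that lemma.

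With these reductions, Theorem \ref{thm:main} collapses to the scalar recursion
$$\|\Delta_L^{k+1}\|_F^2 \;\le\; \kappa_L\,\|\Delta_L^k\|_F^2 + \underbrace{\frac{\tau_L d_1 d_2}{n}\,\|\mathcal{P}_\mathcal{T}\mathcal{A}_\Omega^*\mathcal{E}\|_F^2 + \lambda r\,\phi'(\sigma_r(X^*))}_{=:\,b}.$$
Since $\kappa_L<1$, unrolling gives $\|\Delta_L^k\|_F^2\le\kappa_L^k\|\Delta_L^0\|_F^2+\tfrac{b}{1-\kappa_L}$, which is the asserted linear convergence with ratio $\kappa_L$; and, because the iterates converge to a fixed point $\bar X$ of the update map (standard for proximal-gradient iterations on this class of problems), applying the recursion at $\bar X=L^k=L^{k+1}$ gives $\|X^*-\bar X\|_F^2\le\kappa_L\|X^*-\bar X\|_F^2+b$, hence $\|X^*-\bar X\|_F^2\le\tfrac{b}{1-\kappa_L}$. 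It then remains only to bound $b$.

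The bias term $\lambda r\,\phi'(\sigma_r(X^*))$ is already in final form, so the work is in the noise term. The step I would emphasize is that $\mathcal{P}_\mathcal{T}(M)$ has rank at most $2r$: writing $\mathcal{P}_\mathcal{T}(M)=\mathcal{P}_U M+(I-\mathcal{P}_U)M\mathcal{P}_V$ one has $\|\mathcal{P}_U M\|_F^2=\sum_{i\le r}\|u_i^\top M\|^2\le r\|M\|_2^2$ and similarly for the second piece, so $\|\mathcal{P}_\mathcal{T}\mathcal{A}_\Omega^*\mathcal{E}\|_F^2\le 2r\,\|\mathcal{A}_\Omega^*\mathcal{E}\|_2^2$. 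Substituting the spectral-norm noise bound of Lemma \ref{lem:noise} and then collecting $\tau_L$, $\tfrac{1}{1-\kappa_L}$, $C_1$ and the factor $2r$ into a single universal constant $C_2$ yields the estimate claimed in the theorem, with the quoted convergence rate being exactly the contraction factor $\kappa_L$ of the recursion.

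\textbf{Main obstacle.} Everything outside the noise term is bookkeeping; the one place where a naive argument fails is the estimate of $\|\mathcal{P}_\mathcal{T}\mathcal{A}_\Omega^*\mathcal{E}\|_F$. Bounding it by $\|\mathcal{A}_\Omega^*\mathcal{E}\|_F$ would cost a factor of order $d/r$ and would not give the minimax-optimal rate $r\nu^2 d\log d$, so it is essential to use that the tangent-space projection has rank $O(r)$ and to couple that with the \emph{spectral}-norm (not Frobenius-norm) control of $\mathcal{A}_\Omega^*\mathcal{E}$ from Lemma \ref{lem:noise}. The only secondary subtlety is tracking the rank/incoherence inflation $r\mapsto 2r$, $\mu\mapsto 3\mu$ through Lemma \ref{lem:recht}, which merely replaces the constant $64$ by a larger universal constant.
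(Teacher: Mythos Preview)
Your proposal is correct and follows essentially the same route as the paper: specialize Theorem \ref{thm:main} with $A_s=0$, $\kappa=0$, $\kappa_L=\tfrac16$ from Lemma \ref{lem:recht}, unroll the resulting one-step recursion, and finish with Lemma \ref{lem:noise}. You are in fact more explicit than the paper on the one nontrivial step it glosses over, namely the passage from $\|\mathcal{P}_\mathcal{T}\mathcal{A}_\Omega^*\mathcal{E}\|_F$ to $\sqrt{2r}\,\|\mathcal{A}_\Omega^*\mathcal{E}\|_2$ before invoking Lemma \ref{lem:noise}; and you correctly flag the $r\mapsto 2r$, $\mu\mapsto 3\mu$ inflation needed to apply Lemma \ref{lem:recht} on the union tangent space, which the paper absorbs silently into its constants.
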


 The two terms of the error bound account for the optimal error rate and a bias term.  The optimal error rate is the error bound if we know the tangent space of $X^*$ a priori, that is, the difference between $X^*$ and the solution to the optimization problem $$\underset{X\in \mathcal{T}}{\text{min }}\;||\mathcal{A}_\Omega(X)-b||_F^2.$$  The oracle rate is further discussed in \cite{candes_plan_2010} and \cite{negahban_wainwright_2012}.
 
 \begin{proof}
 We can apply Theorem 1 with $s^k=s^*=0$, $\tau_L=1$, and $\kappa_L=\frac{1}{6}$ (by Lemma 3) to get the bound 
$$||\Delta^{k+1}||_F^2 \leq \frac{1}{6} ||\Delta^{k}||_F^2 + \big(\frac{d_1d_2}{n} ||\mathcal{P}_\mathcal{T}\mathcal{A}_\Omega^* \mathcal{E}||_F^2 + \lambda r \phi'(\sigma_r(X^*) -2\lambda)\big)$$
Initializing with $X^0=0$, we have the error at each iteration as follows:
$$||\Delta^k||^2\leq \big(\frac{1}{6}\big)^k||X^*||_F^2+\frac{6}{5}\big(1-\big(\frac{1}{6}\big)^k\big) \big(\frac{d_1d_2}{n} ||\mathcal{P}_\mathcal{T}\mathcal{A}_\Omega^* \mathcal{E}||_F^2 + \lambda r \phi'(\sigma_r(X^*) -2\lambda)\big)$$
Taking the limit as $k\rightarrow\infty$, and applying the bounds from Lemma \ref{lem:noise} gives the desired result.
 \end{proof}

Perhaps counter-intuitively, a choice of step size, $\tau$, that minimizes the loss function (i.e. the steepest descent step size) is not always the step size that leads to the fastest convergence rate.  To see this, we compare the error bound at iteration $k$ in both cases.  The steepest descent step size ($\tau=n$) would give 
\begin{align*}
    L^*-\tilde{L}^{k}&= L^* -L^{k-1}-  \mathcal{P}_\Omega(L^*-L^{k-1})=P_{\Omega^C}(\Delta_L)
\end{align*} 
where $\Omega^C$ is the set of indices not in $\Omega$.  The stepsize informed by the RIP ($\tau=\frac{d_1d_2}{n}$) gives
\begin{align*}
    L^*-\tilde{L}^{k}&= L^* -L^{k-1}-\frac{d_1d_2}{n} \mathcal{P}_\Omega(L^*-L^{k-1})=\Delta_L-\frac{d_1d_2}{n}P_\Omega(\Delta_L).
\end{align*} 

While the $\tau=1$ gives a significantly smaller error when simply comparing $L^*-\tilde{L}^k$, the error bound for $L^*-L^k$ comes from projecting $L^*-\tilde{L}^k$ onto $\mathcal{T}$.  Without further information, the best bound we can get when using $\tau=n$ would be 
$$||L^*-L^{k}||_F^2 \leq  1- \frac{(1-\kappa)n}{d_1d_2} ||L^*-L^{k-1}||_F^2.$$
This convergence rate approaches 1 asymptotically when we consider the information theoretic minimum number of measurements for large $d_1$ and $d_2$.  However, when $\tau=\frac{d_1d_2}{n}$, the convergence rate remains constant:
\begin{align*}
    ||L^*-L^{k}||_F^2 &\leq  ||\Delta_L-\frac{d_1d_2}{n}\mathcal{P}_\mathcal{T}\mathcal{P}_\Omega \mathcal{P}_\mathcal{T}(\Delta_L)||_F^2\leq \kappa ||\Delta_L||_F^2
\end{align*}
The first inequality uses the fact that $\Delta_L\in \mathcal{T}$ and the second uses the RIP.

\subsection{Robust PCA}
Next, we will use Theorem \ref{thm:main} to analyze the APGD algorithm applied to the problem of RPCA.  Specifically, we are interested in the special case of \eqref{eqn:nonconvex_formulation} where the $A_s=I_n$, and $\mathcal{A}_L=\mathcal{A}_{\Omega^{obs}}$.

In order for RPCA to be possible, we need the nonzero entries of $\mathcal{A}_{\Omega^{obs}}^*s^*$ to be sufficiently well-distributed throughout the rows and columns -- if the sparse corruptions affected the same row or column of $L^*$, then this would also be a low-rank perturbation and thus be impossible to separate from $L^*$ without further information.  So, we will assume that $\mathcal{A}_L^*s^*$ is $\alpha-$sparse, defined as follows.  

\begin{definition}
The matrix $S$ is $\alpha$-sparse for $0<\alpha<1$ if the proportion of nonzero entries in any row or column is less than $\alpha$. That is,
\begin{equation}
  ||S_{i:}||_0 \leq \alpha d_1 \:,   ||S_{:j}||_0 \leq \alpha d_2\; \forall i,j
\end{equation}
\end{definition}

In order to verify that the ROP property holds, we present the following Lemma:
\begin{lemma}\label{lem:ROP}
Let $\mathcal{T}$ be a rank $r$, $\mu-$incoherent tangent space, and let $\Omega$ be an $\alpha-$sparse subspace.  Then,
\begin{equation}
    ||\mathcal{P}_\mathcal{T}\mathcal{P}_\Omega||\leq 2\alpha \mu r, \;||\mathcal{P}_\Omega\mathcal{P}_\mathcal{T}||\leq 2\alpha \mu r
\end{equation}
\end{lemma}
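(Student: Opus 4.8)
The plan is to prove the bound for $\|\mathcal{P}_\Omega\mathcal{P}_\mathcal{T}\|$ and transfer it to $\|\mathcal{P}_\mathcal{T}\mathcal{P}_\Omega\|$ for free: both $\mathcal{P}_\mathcal{T}$ and $\mathcal{P}_\Omega$ are self-adjoint idempotents on $\mathbb{R}^{d_1\times d_2}$ under the Frobenius inner product, so $\mathcal{P}_\Omega\mathcal{P}_\mathcal{T}=(\mathcal{P}_\mathcal{T}\mathcal{P}_\Omega)^{*}$ and the two compositions share the same operator norm, handling both displayed inequalities at once. Writing $U\Sigma V^{T}$ for the SVD defining $\mathcal{T}$ and $P_U=UU^{T}$, $P_V=VV^{T}$ for the associated coordinate projections, the quantity I would actually control is $\|\mathcal{P}_\mathcal{T}\mathcal{P}_\Omega\mathcal{P}_\mathcal{T}\|=\|\mathcal{P}_\mathcal{T}\mathcal{P}_\Omega\|^{2}$. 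By the variational characterization this equals $\sup\{\|\mathcal{P}_\mathcal{T}N\|_F^{2}:\mathrm{supp}(N)\subseteq\Omega,\ \|N\|_F\le 1\}$, so it suffices to bound $\|\mathcal{P}_\mathcal{T}N\|_F^{2}$ for matrices $N$ supported on $\Omega$.

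The key step is the orthogonal splitting
\[
\mathcal{P}_\mathcal{T}N=P_U N+(I-P_U)N P_V,
\]
whose two summands are Frobenius-orthogonal since $P_U(I-P_U)=0$, giving $\|\mathcal{P}_\mathcal{T}N\|_F^{2}=\|P_U N\|_F^{2}+\|(I-P_U)N P_V\|_F^{2}$. I would then estimate each piece by exploiting that $N$ lives on $\Omega$. For the first, I work column by column: if $\Omega_j$ is the support of the $j$-th column $N_{:j}$, then the incoherence bound $\mu r/d_1$ on the squared row norms of $U$ together with the $\alpha$-sparsity bound $|\Omega_j|\le\alpha d_1$ gives, via Cauchy--Schwarz, $\|U^{T}N_{:j}\|^{2}\le\big(\sum_{i\in\Omega_j}\|U_{i:}\|^{2}\big)\|N_{:j}\|^{2}\le\alpha\mu r\,\|N_{:j}\|^{2}$; summing over $j$ yields $\|P_U N\|_F^{2}\le\alpha\mu r\,\|N\|_F^{2}$. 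Symmetrically, using $\|(I-P_U)N P_V\|_F\le\|N P_V\|_F$, a row-by-row estimate with the incoherence bound $\mu r/d_2$ on $V$ and the row-sparsity bound $\alpha d_2$ gives $\|N P_V\|_F^{2}\le\alpha\mu r\,\|N\|_F^{2}$. Adding the two contributions produces $\|\mathcal{P}_\mathcal{T}N\|_F^{2}\le2\alpha\mu r\,\|N\|_F^{2}$, i.e.\ the bound $2\alpha\mu r$ on $\|\mathcal{P}_\mathcal{T}\mathcal{P}_\Omega\mathcal{P}_\mathcal{T}\|=\|\mathcal{P}_\mathcal{T}\mathcal{P}_\Omega\|^{2}$, and identically on $\|\mathcal{P}_\Omega\mathcal{P}_\mathcal{T}\|^{2}$.

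The main obstacle is reconciling this with the literal statement. The clean argument above delivers $2\alpha\mu r$ as a bound on the \emph{squared} operator norm, $\|\mathcal{P}_\mathcal{T}\mathcal{P}_\Omega\|^{2}\le2\alpha\mu r$, and this squared quantity is exactly what the restricted orthogonality property invokes (its definition squares the operator norm), so it is the quantity the downstream analysis needs. I do not see an honest route to the non-squared inequality $\|\mathcal{P}_\mathcal{T}\mathcal{P}_\Omega\|\le2\alpha\mu r$ as written: a single observed entry $(i,j)$ already forces $\|\mathcal{P}_\mathcal{T}\mathcal{P}_\Omega\|=\|\mathcal{P}_\mathcal{T}(e_ie_j^{T})\|_F=\sqrt{\|P_U e_i\|^{2}+\|P_V e_j\|^{2}-\|P_U e_i\|^{2}\|P_V e_j\|^{2}}$, which is of order $\sqrt{\alpha\mu r}$ and exceeds $2\alpha\mu r$ whenever $\alpha\mu r$ is small. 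I would therefore record the result in the provable form $\|\mathcal{P}_\mathcal{T}\mathcal{P}_\Omega\|^{2}\le2\alpha\mu r$ (equivalently $\|\mathcal{P}_\mathcal{T}\mathcal{P}_\Omega\|\le\sqrt{2\alpha\mu r}$), reading the lemma's ``$2\alpha\mu r$'' as the bound on the squared norm that feeds the ROP constant $\kappa$.
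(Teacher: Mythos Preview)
Your argument is correct and follows essentially the same route as the paper's: both reduce the problem to bounding $\|U^{T}S\|_F^{2}$ for $S$ supported on $\Omega$ via the incoherence of $U$ together with the column-sparsity of $\Omega$, and then treat the $V$ side symmetrically. Your two-term orthogonal split $\mathcal{P}_\mathcal{T}N=P_UN+(I-P_U)NP_V$ is in fact a bit cleaner than the paper's three-term ``triangle inequality'' decomposition $\|\mathcal{P}_\mathcal{T}(S)\|_F^{2}\le\|UU^{T}S\|_F^{2}+\|SVV^{T}\|_F^{2}+\|UU^{T}SVV^{T}\|_F^{2}$, but the core estimate is identical. Your reading of the squared-versus-unsquared issue is also on target: the paper's own proof terminates at a bound on $\|\mathcal{P}_\mathcal{T}(S)\|_F^{2}$ and never removes the square, so what is actually established there, as in your argument, is $\|\mathcal{P}_\mathcal{T}\mathcal{P}_\Omega\|^{2}\le c\,\alpha\mu r$, which is precisely the squared quantity that the ROP definition uses downstream.
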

\begin{proof}
By the triangle inequality, for a matrix $S\in \Omega$,
\begin{align*}
    ||\mathcal{P}_\mathcal{T}(S)||_F^2 \leq||UU^TS||^2_F+||SVV^T||_F^2 +||UU^TSVV^T||_F^2
\end{align*}
Because $U$ is an orthonormal matrix, 
\begin{align*}
    ||UU^TS||^2_F=&\text{trace}(UU^TS S^TUU^T)=\text{trace}(U^TUU^TS S^TU)\\=&\text{trace}(U^TS S^TU)=||U^TS||_F^2
\end{align*}
We now expand this norm and use the incoherence property to obtain the desired result:
\begin{align*}
    ||U^TS||_F^2=&\sum_{k=1}^r \sum_{i=1}^{d_1} \langle U_k, S_i \rangle^2
     \leq \sum_{k=1}^r \sum_{i=1}^{d_1} \big(  \sum_{j\in \Omega_i} U_{kj}^2\big)||S_i||^2\\
     =&\sum_{i=1}^{d_1} ||S_i||^2 \sum_{j\in \Omega_i} ||U_j||^2
     \leq \sum_{i=1}^{d_1} ||S_i||^2 \alpha \mu r \leq \alpha \mu r ||S||_F^2
\end{align*}
\end{proof} 
We can now give a bound for the stationary point of the APGD algorithm applied to RPCA.
\begin{theorem}
\label{thm:rpca}
Let $b= \mathcal{A}_{\Omega^{obs}}(L^*)+s^*+\mathcal{E}$ for a rank $r$, $\mu-$ incoherent matrix $L^* \in \mathbb{R}^{d_1\times d_2}$, a sparse vector $s^* \in \mathbb{R}^n$ with $||s^*||_\infty \leq 2 ||L^*||_\infty$, and a vector $\mathcal{E}\in \mathbb{R}^n$ whose entries are independent Gaussian variables with mean 0 and standard deviation $\sigma$.  Under the same assumptions on $\Omega^{obs}$ as Lemma \ref{lem:recht}, and assuming that  $\mathcal{A}_\Omega^* s^*$ is $\alpha-$ sparse and $\alpha \mu r \leq \frac{1}{64}$, if $\lambda_L\geq \frac{1}{6}+||\mathcal{A}_\Omega^* \mathcal{E}||_2$ and $\lambda_s\geq \frac{\mu r}{d_1d_2}+ ||\mathcal{E}||_\infty $, then the iterates of Algorithm 1 linearly converge to a point $\bar{L}, \bar{s}$ satisfying 
\begin{align*}
    ||\Delta_L||_F^2\leq C_1 \frac{d_1d_2}{n}r \nu^2 d \text{ log}(d), \;\;\:
    ||\Delta_S||_F^2\leq C_2\frac{d_1d_2}{n}r \nu^2 d \text{ log}(d)
\end{align*}
with convergence rate $\frac{1}{6}$.

\end{theorem}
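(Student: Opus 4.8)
The plan is to obtain Theorem \ref{thm:rpca} as an instantiation of Theorem \ref{thm:main}, running in close parallel with the proof of Theorem \ref{thm:mc} but now tracking the sparse iterate as well. The first task is to check the structural hypotheses of Theorem \ref{thm:main} for the RPCA model $\mathcal{A}_L=\mathcal{A}_{\Omega^{obs}}$, $A_s=I_n$ with explicit constants. Lemma \ref{lem:recht} gives the low-rank RIP on any $\mu$-incoherent rank-$r$ tangent space with $\tau_L=1$ and $\kappa_L=\tfrac{1}{6}$; since the proof of Theorem \ref{thm:main} only invokes the RIP on the enlarged space $\mathcal{T}^k$, which is $3\mu$-incoherent by Lemma \ref{lem:incoherence}, applying Lemma \ref{lem:recht} with incoherence $3\mu$ (the change absorbed into the sample-size constant) supplies the $(2r,3\mu,\tfrac{1}{6})$ RIP demanded there. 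Because $A_s=I_n$ is an exact isometry, the sparse RIP holds with $\tau_s=d_s/n=1$ and $\kappa_s=0$. For the ROP, the key observation is that $\mathcal{A}_{\Omega^{obs}}$ restricts to an isometry on matrices supported inside $\Omega^{obs}$, so writing $\tilde\Omega\subseteq\Omega^{obs}$ for the matrix support of the corruption $\mathcal{A}_{\Omega^{obs}}^*s^*$ we have $\|\mathcal{P}_\Omega A_s^*\mathcal{A}_{\Omega^{obs}}\mathcal{P}_\mathcal{T}\|=\|\mathcal{P}_{\tilde\Omega}\mathcal{P}_\mathcal{T}\|$ and $\|\mathcal{P}_\mathcal{T}\mathcal{A}_{\Omega^{obs}}^*A_s\mathcal{P}_\Omega\|=\|\mathcal{P}_\mathcal{T}\mathcal{P}_{\tilde\Omega}\|$; as $\mathcal{A}_{\Omega^{obs}}^*s^*$ is $\alpha$-sparse, Lemma \ref{lem:ROP} bounds both by $2\alpha\mu r$, so the ROP holds with $\kappa=O(\alpha\mu r)$, which $\alpha\mu r\le\tfrac{1}{64}$ renders negligible against $\kappa_L$.

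Next I would verify that the two step-size conditions of Theorem \ref{thm:main} are met at every iterate. With $A_s=I_n$ they read $\lambda_L\ge\|\mathcal{A}_{\Omega^{obs}}^*\mathcal{E}\|_2+\|\mathcal{A}_{\Omega^{obs}}^*\Delta_s^{k-1}\|_2$ and $\lambda_s\ge\|\mathcal{E}\|_\infty+\|\mathcal{A}_{\Omega^{obs}}\Delta_L^{k-1}\|_\infty$. I would show, by induction carried out simultaneously with the incoherence and support invariants already used in the proof of Theorem \ref{thm:main}, that $\|\mathcal{A}_{\Omega^{obs}}^*\Delta_s^{k-1}\|_2\le\tfrac{1}{6}$ and $\|\mathcal{A}_{\Omega^{obs}}\Delta_L^{k-1}\|_\infty\le\tfrac{\mu r}{d_1 d_2}$ for all $k$ — the base case $k=1$ using $\Delta_s^0=s^*$, the a priori bound $\|s^*\|_\infty\le 2\|L^*\|_\infty$, the $\alpha$-sparsity of the corruption, and the incoherence of $L^*$; the inductive step using $\|\mathcal{A}_{\Omega^{obs}}\Delta_L\|_\infty\le\|\Delta_L\|_\infty$, an incoherence/spikiness bound on $\Delta_L^k\in\mathcal{T}^k$, and the fact that the recursion keeps the iterates in a shrinking region. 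The stated hypotheses $\lambda_L\ge\tfrac{1}{6}+\|\mathcal{A}_{\Omega^{obs}}^*\mathcal{E}\|_2$ and $\lambda_s\ge\tfrac{\mu r}{d_1 d_2}+\|\mathcal{E}\|_\infty$ are then exactly what is required. I would also fix the regularizer's $\gamma$-parameter large enough for weak convexity ($\gamma\ge\tau_L\lambda_L$ and $\gamma\ge\tau_s\lambda_s$) but small enough that $\phi'(\sigma_r(L^*))=0$ and $\phi'(s_{min}-\lambda_s)=0$ — legitimate when $L^*$ is well conditioned and the corruptions are bounded away from zero — so that both bias terms of Theorem \ref{thm:main} vanish.

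Finally I would iterate the resulting recursion to its fixed point and insert the noise bounds. With $\kappa_s=0$ and the bias terms gone, the $s$-step of Theorem \ref{thm:main} gives $\|\Delta_s^k\|^2\le\kappa\|\Delta_L^k\|_F^2+\tfrac{\tau_s d_s}{n}\|\mathcal{P}_\Omega\mathcal{E}\|^2$; substituting into the $L$-step yields $\|\Delta_L^{k+1}\|_F^2\le(\tfrac{1}{6}+\kappa^2\tau_L)\|\Delta_L^k\|_F^2+\tfrac{\tau_L d_1 d_2}{n}\|\mathcal{P}_\mathcal{T}\mathcal{A}_{\Omega^{obs}}^*\mathcal{E}\|_F^2+\kappa\tau_L\|\mathcal{P}_\Omega\mathcal{E}\|^2$, with contraction factor $\tfrac{1}{6}+\kappa^2\tau_L\le\tfrac{1}{6}+o(1)$ by $\alpha\mu r\le\tfrac{1}{64}$; hence $L^k\to\bar L$ and then $s^k\to\bar s$ linearly at rate $\tfrac{1}{6}$. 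Summing the geometric series bounds $\|\Delta_L\|_F^2$ by a constant times $\tfrac{d_1 d_2}{n}\|\mathcal{P}_\mathcal{T}\mathcal{A}_{\Omega^{obs}}^*\mathcal{E}\|_F^2+\|\mathcal{P}_\Omega\mathcal{E}\|^2$; using $\|\mathcal{P}_\mathcal{T}(M)\|_F^2\lesssim r\|M\|_2^2$ (as $\mathcal{P}_\mathcal{T}(M)$ has rank at most $2r$) together with the bound on $\|\mathcal{A}_{\Omega^{obs}}^*\mathcal{E}\|_2$ from Lemma \ref{lem:noise}, and $\|\mathcal{P}_\Omega\mathcal{E}\|^2\le|\Omega|\,\|\mathcal{E}\|_\infty^2$ with $\|\mathcal{E}\|_\infty$ of order $\nu\sqrt{\log n}$, produces $\|\Delta_L\|_F^2\le C_1\tfrac{d_1 d_2}{n}r\nu^2 d\log d$ (here $\nu=\sigma$); feeding this back into the $s$-step gives $\|\Delta_s\|^2\le C_2\tfrac{d_1 d_2}{n}r\nu^2 d\log d$.

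I expect the main obstacle to be the inductive step-size verification: the cross-term bounds $\|\mathcal{A}_{\Omega^{obs}}^*\Delta_s^{k-1}\|_2\le\tfrac{1}{6}$ and $\|\mathcal{A}_{\Omega^{obs}}\Delta_L^{k-1}\|_\infty\le\tfrac{\mu r}{d_1 d_2}$ cannot simply be applied on top of Theorem \ref{thm:main}, but must be bootstrapped through the same induction on incoherence and support that underlies its proof, coupled to the error recursion. A secondary difficulty is ensuring that the amplification factors $\tau_L$, $\tau_s$, and $\tfrac{d_1 d_2}{n}$ multiplying the ROP constant never let the restricted-orthogonality contribution overwhelm either the $\tfrac{1}{6}$ contraction or the noise floor — which is exactly where the quantitative hypotheses $\alpha\mu r\le\tfrac{1}{64}$ and $n>64\mu r(d_1+d_2)\log d_2$ earn their keep.
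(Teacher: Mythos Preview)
Your proposal is correct and follows essentially the same route as the paper: instantiate Theorem~\ref{thm:main} with $\kappa_L=\tfrac{1}{6}$ from Lemma~\ref{lem:recht}, $\kappa_s=0$ from $A_s=I_n$, and $\kappa=O(\alpha\mu r)$ from Lemma~\ref{lem:ROP}; choose the regularizer so that both bias terms vanish; substitute one recursion into the other to obtain a single geometric contraction at rate $\tfrac{1}{6}$; and finish with the noise bounds of Lemma~\ref{lem:noise}.

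The one place you diverge is in verifying the step-size hypotheses $\lambda_L\ge\|\mathcal{A}_L^*\mathcal{E}\|_2+\|\mathcal{A}_L^*\Delta_s^{k-1}\|_2$ and $\lambda_s\ge\|\mathcal{E}\|_\infty+\|\mathcal{A}_L\Delta_L^{k-1}\|_\infty$. The paper dispatches these by monotonicity: it asserts that $\|s^k\|_2$ is nonincreasing in $k$ (so the cross term is dominated by $\|\mathcal{A}_L^*s^*\|_2$) and that $\|\Delta_L^k\|_\infty\le\|\Delta_L^0\|_\infty=\|L^*\|_\infty$, the latter then bounded by $\tfrac{\mu r}{d_1d_2}$ via incoherence. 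Your proposed bootstrapping induction, coupled to the error recursion and the incoherence/support invariants, is a more explicit way to obtain the same conclusions; it is not a different argument so much as a filling-in of steps the paper states without proof. Either way the remaining arithmetic and the application of Lemma~\ref{lem:noise} are identical.
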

\begin{proof}
In order to apply Lemma 4, we first show the following properties about the parameters $\lambda_L$ and $\lambda_s$.
\begin{align*}
    \lambda_L &\geq ||\mathcal{A}_L^* \mathcal{E}||_2+ ||\mathcal{A}_L^* s^*||_2\\
    &\geq ||\mathcal{A}_L^* \mathcal{E}||_2+ ||\mathcal{A}_L^* \delta_s^k||_2 \; \forall k\\
\end{align*}
where the first inequality comes from the assumption on $\lambda_L$ and the second from the fact that the two norm of $s^k$ is decreasing.
\begin{align*}
    \lambda_s \geq& || \mathcal{E}||_\infty+ \frac{\mu r}{d_1 d_2} \\
         \geq& || \mathcal{E}||_\infty+ ||L^*||_\infty \\
         \geq& || \mathcal{E}||_\infty+ ||\Delta_L^k||_\infty \;\forall k
\end{align*}
where the first inequality is our assumption on $\lambda_s$,  the second follows from incoherence of $L^*$, and the last from the fact that $$||\Delta_L^k||_\infty \leq ||\Delta_L^0||_\infty =||L^*||_\infty.$$

In order to show the bias term is 0, we use our assumption that $\lambda_L \leq \frac{1}{4}\sigma_r(L^*)$ and the fact that the MCP regularizer can satisfy $\phi_L'(3 \lambda_L)=0$ while still maintaining $\frac{1}{2\lambda_L}$-weak convexity.  Likewise, when we assume $\lambda_s\leq \frac{1}{4} s_{min}$, we can choose $\phi_s$ such that $\phi'_s(s_{min}-\lambda_s) \leq \phi'_s(3\lambda_s) =0$ while still maintaining $\frac{1}{2\lambda_s}$ weak convexity. 

We can now apply the first part of Theorem \ref{thm:main} to obtain the bound:
\begin{align*}
    ||\Delta^{k+1}_L||_F^2&\leq  \frac{1}{6}||\Delta^k_L||_F^2+\frac{1}{64}||\Delta^k_s||_F^2 +\frac{d_1d_2}{n}||\mathcal{P}_\mathcal{T} \mathcal{A}_L^* \mathcal{E} ||_F^2\\
   ||\Delta_S^{k+1}||_F^2&\leq  \frac{1}{64}||\Delta_L^{k+1}||_F^2+||\mathcal{P}_\Omega \mathcal{E} ||_F^2\\
    &\leq \frac{1}{384}||\Delta_L^k||_F^2+\frac{1}{4096}||\Delta_S^k||_F^2+\frac{d_1d_2}{64n}||\mathcal{P}_\mathcal{T} \mathcal{A}_L^* \mathcal{E} ||_F^2+||\mathcal{P}_\Omega \mathcal{E} ||_F^2
\end{align*}
The limit point of this sequence gives:
\begin{align*}
    ||\Delta_L||_F^2&\leq  \frac{6}{5} \frac{d_1d_2}{n}||\mathcal{P}_\mathcal{T} \mathcal{A}_L^* \mathcal{E} ||_F^2\\
    ||\Delta_S||_F^2&\leq \frac{d_1d_2}{32n}||\mathcal{P}_\mathcal{T} \mathcal{A}_L^* \mathcal{E} ||_F^2+\frac{6}{5}||\mathcal{P}_\Omega \mathcal{E} ||_F^2
\end{align*}
From here, we can apply the bounds from Lemma \ref{lem:noise} to get the desired results.
\end{proof}

The bound presented in Theorem \ref{thm:rpca} are a major improvement upon previous bounds, such as the ones presented in Agarwal, Negahban, and Wainwright \cite{agarwal_negahban_2012}.  For the estimator \eqref{eqn:rpca_reg}, they require that $$\lambda_s \geq 4(||L^*||_\infty+||E||_\infty). $$  However, because the sparse matrix $S$ obtained from their method will be zero whenever the true value is less than $\lambda_s$, their result has an implicit assumption that the nonzero values of $S^*$ are four times larger than any value in $L^*+E$.  If this were true, then identifying entries of the measured matrix $M$ could be accomplished by identifying the entries with the largest absolute value. Our result imposed the much less strict assumption that the nonzero entries of $S^*$ are larger than $||E||_\infty$ plus a small constant (an assumption that is necessary for $S^*$ and $E$ to be separable).

\section{Numerical Results}
We implemented Algorithm 1 in Matlab R2020a, for which the code is available at GitHub. All results in this section are obtained with the Matlab version in order to accurately compare to other algorithms which are only available in Matlab, and are run on a Windows 10 desktop with an AMD Phenom 3.40 GHz processor and 8 Gb of RAM.

\subsection{Matrix Completion}

\begin{table}
\centering
\caption{Comparison of four different matrix completion algorithms on randomly generated low rank matrices and common recommendation data sets.  The algorithm LMaFit reconstructs a matrix of a given rank $k$.  The table shows the results when the algorithm is given the exact rank($k=r$) and an incorrect rank ($k=2r$).  Time is given in seconds }\label{table:mc} 
\begin{tabular}{|l|l|l|l|l|l|l|l|l|}
\hline
$d_1$         & \multicolumn{2}{c|}{1000} & \multicolumn{2}{c|}{1000} & \multicolumn{2}{c|}{5000} & \multicolumn{2}{c|}{5000} \\ \hline
$d_2$         & \multicolumn{2}{c|}{500}  & \multicolumn{2}{c|}{500}  & \multicolumn{2}{c|}{1000} & \multicolumn{2}{c|}{1000} \\ \hline
$r$           & \multicolumn{2}{c|}{5}    & \multicolumn{2}{c|}{5}    & \multicolumn{2}{c|}{10}   & \multicolumn{2}{c|}{10}   \\ \hline
$n/d_1 d_2$   & \multicolumn{2}{c|}{0.3}  & \multicolumn{2}{c|}{0.1}  & \multicolumn{2}{c|}{0.2}  & \multicolumn{2}{c|}{0.05} \\ \hline
std($E$)      & \multicolumn{2}{c|}{0.1}  & \multicolumn{2}{c|}{0.02} & \multicolumn{2}{c|}{0.1}  & \multicolumn{2}{c|}{0.02} \\ \hline
              & RFNE          & T         & RFNE          & T         & RFNE          & T         & RFNE          & T         \\ \hline
APGD          & 3.28e-4       & 0.7       & 2.90e-4       & 1.1       & 1.69e-4       & 10        & 1.96e-4       & 29        \\ \hline
FaNCL         & 3.28e-4       & 1.8       & 2.90e-4       & 4.3       & 1.69e-4       & 31        & 2.49E-4       & 57        \\ \hline
IALM          & 3.28e-4       & 2.6       & 2.92e-4       & 2.8       & 1.72e-4       & 32        & 1.99e-4       & 27        \\ \hline
LMaFit (2$r$) & 4.68e2        & 21        & 1.20e3        & 6.2       & 4.99e2        & 197       & 1.60e3        & 40        \\ \hline
LMaFit($r$)   & 3.28e-4       & 0.4       & 2.90e-4       & 0.3       & 1.69e-4       & 3.9       & 1.96e-4       & 3.9      
\\
\hline
\end{tabular}
\end{table}

\begin{table}[] \caption{Comparison of four different matrix completion algorithms on two common recommender system benchmarking datasets. }
\centering
\begin{tabular}{l|c|l|l|l|}
                                                                   & \multicolumn{2}{c|}{Jester Dataset}   & \multicolumn{2}{c|}{ML 1M}                  \\ \hline
\multicolumn{1}{|l|}{Number of Users ($d_1$)}                      & \multicolumn{2}{c|}{24983}            & \multicolumn{2}{c|}{6040}                   \\ \hline
\multicolumn{1}{|l|}{Number of Items ($d_2$)}                      & \multicolumn{2}{c|}{100}              & \multicolumn{2}{c|}{3952}                   \\ \hline
\multicolumn{1}{|l|}{Percentage of Entries Observed ($n/d_1 d_2$)} & \multicolumn{2}{c|}{0.58}             & \multicolumn{2}{c|}{0.034}                  \\ \hline
\multicolumn{1}{|l|}{}                                             & \multicolumn{1}{l|}{NMAE}  & Time (s) & NMAE                             & Time (s) \\ \hline
\multicolumn{1}{|l|}{APGD}                                         & \multicolumn{1}{l|}{0.159} & 21       & 0.172 & 172      \\ \hline
\multicolumn{1}{|l|}{FaNCL}                                        & \multicolumn{1}{l|}{0.183} & 42       & 0.200                            & 42       \\ \hline
\multicolumn{1}{|l|}{IALM}                                         & \multicolumn{1}{l|}{0.163} & 77       & 0.183                            & 216      \\ \hline
\multicolumn{1}{|l|}{LMaFit}                                  & 0.168                      & 7.4      & 9.174                            & 44 \\      \hline
\end{tabular}
\end{table}

We compare our method for matrix completion to another method utilizing nonconvex regularizer from \cite{yao_kwok_2019} (FaNCL), along with a method to minimize the nuclear norm IALM, from \cite{ialm}, and a rank constrained method, LMaFit \cite{lmafitMC}.  The results are shown in Table \ref{table:mc}.

We start by comparing the performance of the methods on randomly generated low rank matrices of varying size, rank, percentage of observed entries, and standard deviation of the noise in the measurements (shown relative to the mean absolute value of the low rank matrix).  In each of the cases, our method performs exactly as well as FaNCL and LMaFit when the correct rank is given.  IALM performs equally well in the first case, and slightly worse than the remaining three cases due to the fact that the nuclear norm biases the result towards zero.  

Next, we show the results on common data sets for recommendation systems, the Jester data set \cite{Jester} and MovieLens 1M \cite{movielens}.  For each of these two data sets, we partition the observations into five folds, fit a low rank model to four of the folds and calculate the accuracy on the remaining fold. We repeat this for each of the five folds and present the average normalized mean absolute error.  In both cases, our method outperforms the other three algorithms we compare to.

\subsection{Robust PCA}

We compare our method to several other prominent RPCA methods, including LMaFit \cite{lmafitRPCA}, AltProj \cite{netrapalli2014non}, RPCA-GD \cite{yi_park_2016}, and IALM \cite{ialm}.  We compared with many other methods included in the LRSLibrary \cite{lrslibrary2015}, however we only include results from the aforementioned algorithms as they gave the most accurate results for matrices with a significant amount with noise, a test case we emphasise in this section.  

It is worth noting that, while our algorithm does not require an estimate of the rank of $L^*$ or the sparsity of $S^*$ a priori, RPCA-GD requires both, and LMaFit and AltProj require an estimate of the rank.  However, we found that LMaFit and AltProj still perform very well when this estimate is unreliable, as LMaFit includes a rank-estimation scheme and AltProj starts by performing a rank 1 projection, and increases the rank up until the estimate given.  We provide all methods with an upper bound on the rank equal to twice the rank of $L^*$ and an upper bound on the number of corrupted entries equal to twice that of $S^*$.  

\begin{table}
\caption{Accuracy and run-time for RPCA on four videos of fish.}\label{table:fish}
\centering
\begin{tabular}{l|l|l|l|l|l|l|l|l|l|l|}
\cline{2-11}
                              & \multicolumn{2}{c|}{Marine Snow} & \multicolumn{2}{c|}{Aquaculture} & \multicolumn{2}{c|}{Caustics} & \multicolumn{2}{c|}{Two Fish} & \multicolumn{2}{c|}{Fish Swarm} \\ \cline{2-11} 
                              & Acc          & Time     & Acc             & Time        & Acc         & Time  & Acc         & Time   & Acc          & Time    \\ \hline
\multicolumn{1}{|l|}{APGD}    & 0.92              & 87           & \textbf{0.92}        & 592             & \textbf{0.88}    & 459        & 0.90              & 467        & \textbf{0.75}     & 473         \\ \hline
\multicolumn{1}{|l|}{LMaFit}  & 0.59              & 258          & 0.65                 & 316             & 0.7              & 273        & 0.62             & 291        & 0.55              & 310         \\ \hline
\multicolumn{1}{|l|}{AltProj} & 0.84              & 143          & 0.62                 & 109             & 0.86             & 83         & 0.76             & 109        & 0.62              & 130         \\ \hline
\multicolumn{1}{|l|}{RPCA-GD} & \textbf{0.92}     & 94           & 0.72                 & 77              & 0.87             & 76         & \textbf{0.96}    & 96         & 0.66              & 93          \\ \hline
\multicolumn{1}{|l|}{IALM}    & 0.57              & 379          & 0.56                 & 286             & 0.56             & 288        & 0.56             & 342        & 0.56              & 305         \\ \hline
\end{tabular}
\end{table}

\begin{figure}
    \centering
    \includegraphics[width=\columnwidth]{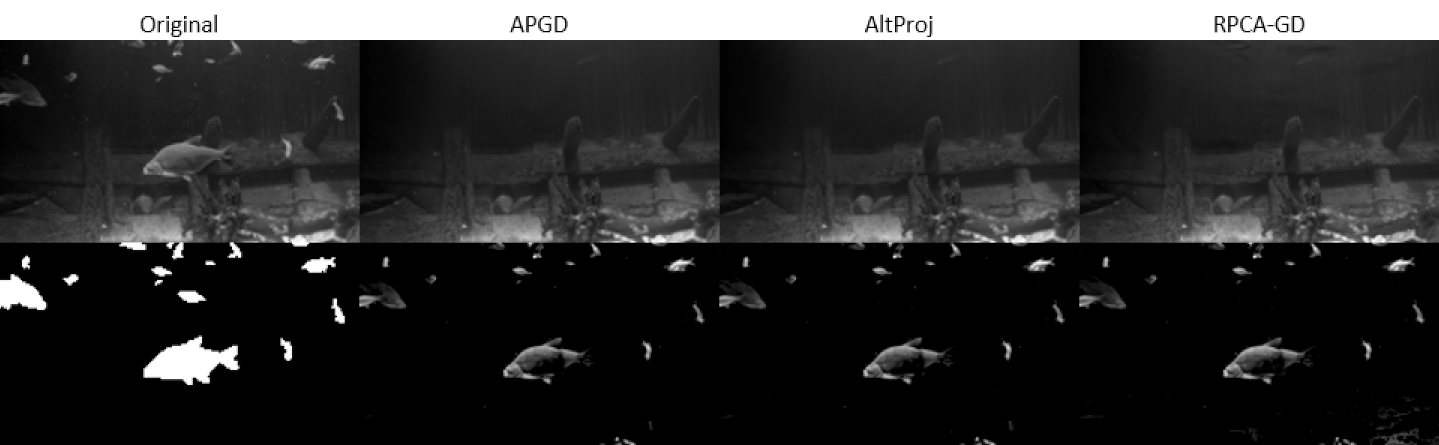}
    \caption{Frames from the Marine Snow video.  The first row of the first column shows the original frame, with the image segmentation for that frame below it.  The remaining three columns show the background and foreground obtained by three different methods. }
    \label{fig:marine_snow}
\end{figure}

\begin{figure}[ht]
    \centering
    \includegraphics[width=\columnwidth]{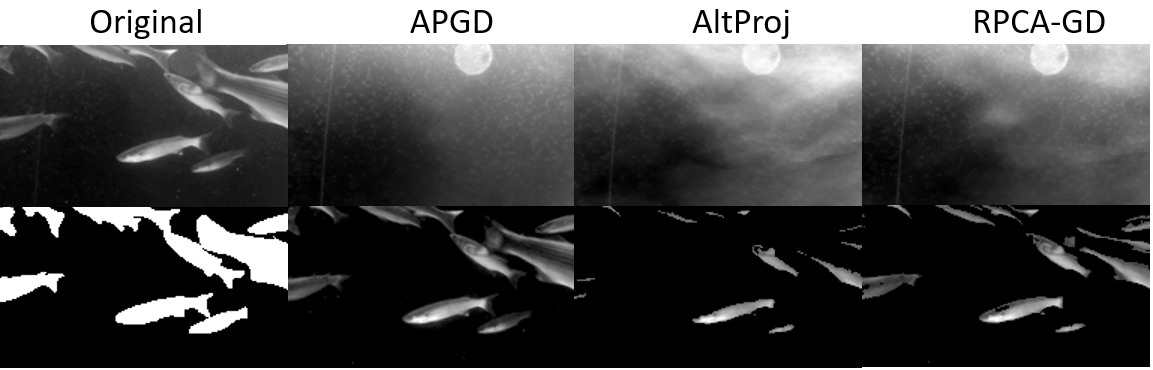}
    \caption{Frames from the Small Aquaculture video.}
    \label{fig:small_aquaculture}
\end{figure}
\begin{figure}[ht]
    \centering
    \includegraphics[width=\columnwidth]{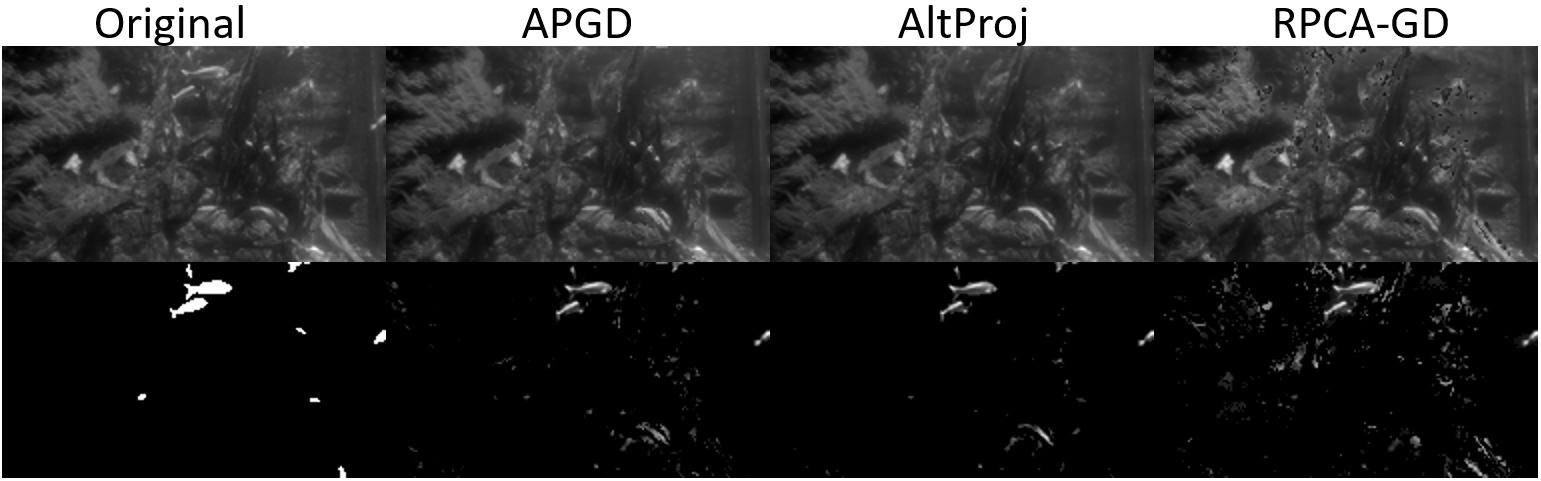}
    \caption{Frames from the Small Aquaculture video.}
    \label{fig:caustics}
\end{figure}\begin{figure}[ht]
    \centering
    \includegraphics[width=\columnwidth]{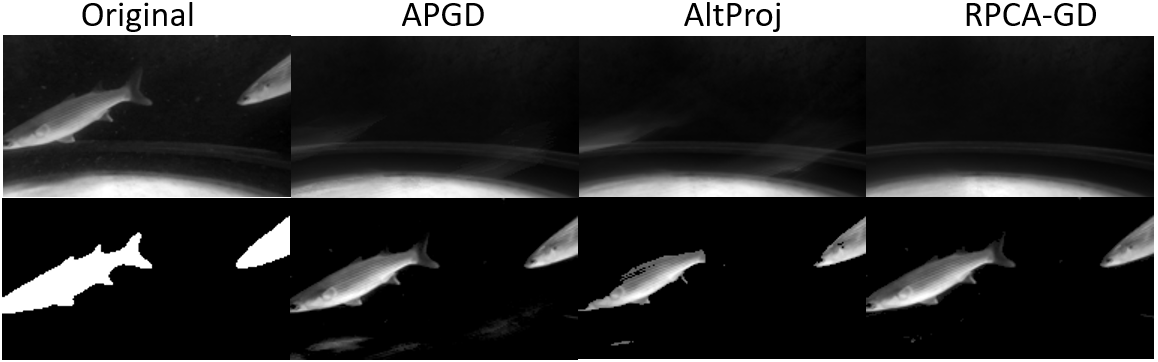}
    \caption{Frames from the Small Aquaculture video.}
    \label{fig:two_fish}
\end{figure}\begin{figure}[ht]
    \centering
    \includegraphics[width=\columnwidth]{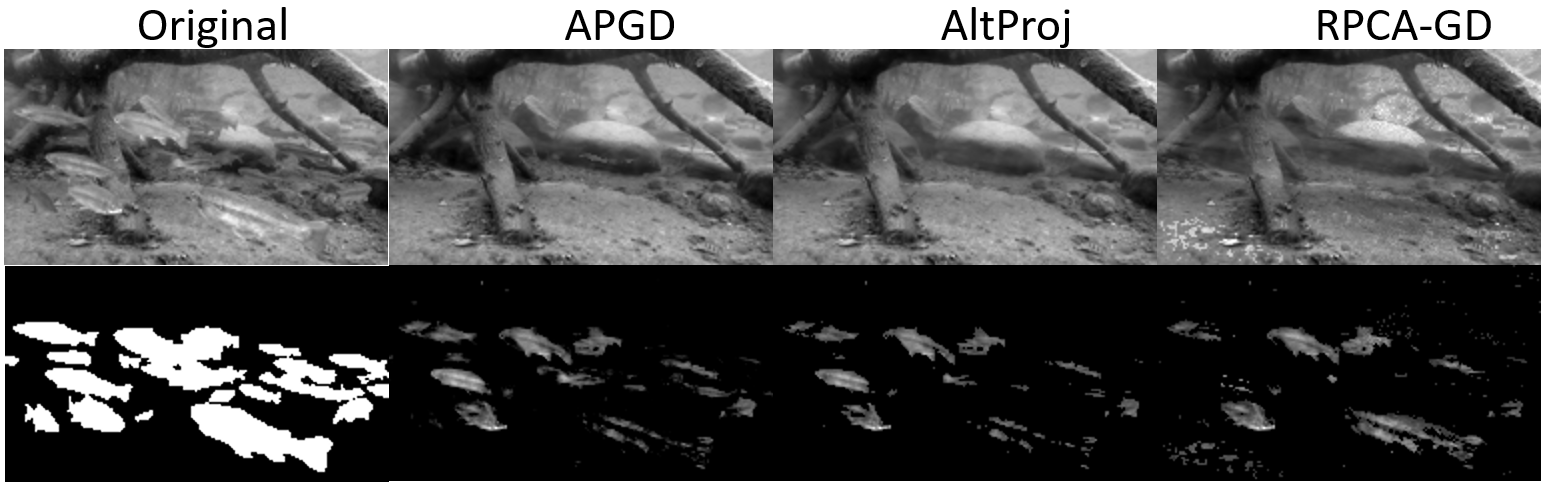}
    \caption{Frames from the Small Aquaculture video.}
    \label{fig:fish_swarm}
\end{figure}
The most commonly used test case in the literature for RPCA is the task of separating the background and foreground of a video.  In this scenario, each frame of the video is represented as a column vector in $M$.  If the background is static (or, at least, in some way repetitive), then we can expect the background of each frame to be represented as a low rank matrix, and the foreground as a sparse matrix.  See \cite{candes_li_2011} for further details.

We evaluated our algorithm on the Underwater Change Detection dataset \cite{fish}.  Out of the 1100 frames in each video, the ground truth image segmentation is included for last 100 frames.  This allows use to present an objective and accurate metric of how well each method is able to identify the foreground of the image. 

In Table \ref{table:fish}, we present the runtime and the accuracy of determining which pixels contain a fish for our method compared to the four previously mentioned approaches.  To calculate the accuracy, we average of the true positive ratio and true negative ratio.  In three of the five videos, our method achieves the highest accuracy, whereas in the other two the RPCA-GD algorithm preforms slightly better.  The recovered background and foreground for our method, RPCA-GD and AltProj are shown in Figures \ref{fig:marine_snow}, \ref{fig:small_aquaculture}, \ref{fig:caustics}, \ref{fig:two_fish}, \ref{fig:fish_swarm}, along with the original frame and segmented image.
\section{Conclusions}
We have shown a novel convergence analysis of the alternating proximal gradient descent algorithm applied to the problems of matrix completion and RPCA with nonconvex regularizers, and bound the difference from the ground truth low rank matrix and sparse vector.  Future work on the topic could include extending our analysis to data that lies on more complicated, nonlinear manifolds.

\bibliographystyle{siamplain}

\end{document}